\documentclass{article}
\usepackage{graphicx} % Required for inserting images
\usepackage{xcolor}
\usepackage{amsmath}
\usepackage{amssymb}
\usepackage{amsthm}
\usepackage{authblk}
\usepackage{geometry}
\usepackage{subcaption}  % To use subfigure environment
\usepackage{colortbl} % for coloring cells in tables
\usepackage{xcolor} % for color specifications
\usepackage{tikz}  % to adjust transparency of cell colors
\usepackage{booktabs}
\usepackage{siunitx}
\usepackage{orcidlink}
\usepackage{multirow}
\usepackage{url}
\usepackage[ruled, noend]{algorithm2e}
\usepackage[left]{lineno}
\theoremstyle{plain}
\newtheorem{proposition}{Proposition}[section]

\theoremstyle{remark}

\usepackage[
  backend=biber,
  style=numeric,
  sorting=none,
  giveninits=true,
  maxnames=5,
  minnames=1,
  doi=true,
  url=true,
  isbn=false
]{biblatex}
\DeclareNameAlias{author}{family-given}
\DeclareNameAlias{editor}{family-given}
\DeclareDelimFormat{finalnamedelim}{\addspace\&\space}

\title{Learning partially observed systems with neural Hamiltonian ordinary differential equations}

\author[1, 2, *]{Sunniva Meltzer\,\orcidlink{0009-0003-5012-9718}}
\author[1]{Sølve Eidnes\,\orcidlink{0000-0002-1002-3543}}
\author[1, 3]{Alexander Johannes Stasik\,\orcidlink{0000-0003-1646-2472}}
\affil[1]{Department of Mathematics and Cybernetics, SINTEF Digital, Norway}
\affil[2]{Department of Physics, University of Oslo, Norway}
\affil[3]{Department of Data Science, Norwegian University of Life Science, Norway}
\affil[*]{Corresponding author: sunniva.meltzer@sintef.no}

\date{}

\begin{document}
%\linenumbers
\maketitle

\begin{abstract}

When learning dynamical systems from data, embedding physical structure can constrain the solution space and improve generalization, but many physics-informed models assume access to the full system state. This limits their use in partially observed settings, where some state variables are completely unobserved and must be inferred without direct supervision. Here, we present neural Hamiltonian ordinary differential equations (NHODE), a framework that combines Hamiltonian neural networks (HNNs) with neural ordinary differential equations (neural ODEs) to learn partially observed dynamical systems from data. The Hamiltonian structure enforces energy conservation by construction, while the neural ODE framework enables a flexible training procedure that allows the loss to be defined only on observed variables. We also incorporate additional physical constraints through symmetry-aware coordinate transformations and separable energy formulations. The framework is evaluated on systems of increasing complexity, from linear and nonlinear mass-spring systems to the chaotic three-body problem. Across all examples, increasing the amount of embedded physical structure improves the accuracy and long-horizon stability of the predictions. Even in the most challenging regimes, the NHODE framework captures both observed and latent dynamics, whereas purely data-driven baselines become unstable.

\end{abstract}

%\section{Introduction}

\noindent
In many physical systems of practical interest, some state variables are inaccessible to measurement, yet their dynamics shape the evolution of the observed state. To accurately model such systems, the corresponding region of the phase space must be treated as fully latent, forcing the model to infer the system dynamics there without receiving direct feedback during training as in a classical machine learning framework. If the latent dynamics and their coupling to the observed system are learned incorrectly, the model will not generalize to unseen data.

In this work, we show that by incorporating physical knowledge into a machine learning model, we can accurately learn not only the observed but also the latent part of a partially observed system, where purely data-driven methods fail. To incorporate physics, we build on Hamiltonian neural networks (HNNs) \cite{Greydanus2019HamiltonianNetworks}, and to facilitate learning of full system dynamics from partial data, we build on neural ordinary differential equations (ODEs) \cite{Chen2018NeuralEquations}. Specifically, HNNs enforce energy conservation by construction, while rollout-based neural ODE training allows the model to evolve a full latent state while evaluating the loss only on the observed components. We call the resulting framework \textit{neural Hamiltonian ODEs} (NHODE), and use it to model a set of increasingly complex physical systems. We identify when and to what extent physical information embedded in the model renders a problem learnable.

Throughout this paper, we use the term \textit{neural ODE} to describe a learned continuous-time dynamical-system model intended to represent a physical system, rather than in the continuous-depth neural network sense in which an ODE is used to describe the evolution of hidden units over an artificial time variable. This distinction has been a point of confusion in recent literature. The latter definition is the one made explicit in the paper by Chen et al.\ \cite{Chen2018NeuralEquations} that popularized neural ODEs, and it is also the perspective underlying preceding related work by Weinan E and Haber and Ruthotto \cite{E2017ASystems, Haber2017StableNetworks}. By contrast, our use of the term neural ODE is the modelling sense where a neural network parameterizes the vector field of an actual system to be learned from data, as in the earlier system-identification approaches of Chu and Shoureshi \cite{Chu1991ASystems} and Rico-Martínez et al.\ \cite{Rico-Martinez1992Discrete-Data}. In this sense, our work is closer to the physical-modelling viewpoint encompassed by Kidger's broader framework of neural differential equations \cite{Kidger2021OnEquations}, where neural ODEs are treated both as continuous-depth architectures and as tools for modelling real dynamical systems, including settings with latent or partially observed state variables. Other works following this approach include \cite{Botev2021WhichDynamics, Quaglino2020SNODE:Identification, Rahman2022NeuralIdentification,Matsubara2023FINDE:Quantities}. Rackaukas and collaborators, who introduced the related concept of universal differential equations \cite{Rackauckas2021UniversalLearning}, have extended this view of neural ODEs beyond deterministic finite-dimensional systems to partial differential equations \cite{Rackauckas2019DiffEqFlux.jlEquations} and Bayesian inference frameworks \cite{Dandekar2022BayesianEquations}.

Greydanus et al.\ introduced Hamiltonian neural networks (HNNs) \cite{Greydanus2019HamiltonianNetworks} (see also the contemporary work \cite{Bertalan2019OnData}), in which a neural network model approximating the Hamiltonian function is learned to fit the corresponding canonical Hamiltonian system to data.  The loss is defined on the residual of Hamilton's equations, requiring data on all state variables and their time derivatives, or finite-difference approximations thereof, at every time step. Much effort has been put into making the training more efficient or accurate by utilizing symplectic \cite{Chen2020SymplecticNetworks, David2023SymplecticNetworks} or symmetric and higher-order integration schemes \cite{Eidnes2023Pseudo-HamiltonianForces, Celledoni2025LearningIntegrators}, or through post-training corrections of the learned vector field based on backward error analysis \cite{Zhu2020DeepIntegrators, Offen2022SymplecticSystems}. However, if data on all state variables is not available, the training approach assumed in these works is not viable, since the input to the neural network must be data representing all state variables at any time step.

To that end, the crucial ingredient that we borrow from the neural ODE framework is that numerical integration is performed at every step of the training loop. This is as opposed to the standard approach for HNNs, where the loss is defined on the residual of the ODE or a given integration scheme. In neural ODE training, the model is integrated forward from an initial condition to produce a predicted trajectory, and the loss is evaluated between the predicted and observed state at the data time points. Importantly, this roll-out technique means that the neural network modelling the Hamiltonian takes predictions of the state, and not data, as input, except at the initial step. Thus, we can meaningfully restrict the loss to the observed components of the state alone. 

Among existing works, the Hamiltonian generative network approach of Toth et al.\ \cite{Toth2020HamiltonianNetworks} is closely related to our neural Hamiltonian ODEs in that it combines a learned Hamiltonian with rollout-based training under indirect observations, albeit in a generative modelling setting and with latent variables from pixels. On the other hand, the method of Grigorian et al.\ \cite{Grigorian2025LearningSystems} is closely related to our work from the point of view of partial observability: it uses a hybrid neural ODE framework to learn governing equations of partially observed systems with unobserved state variables, training on the observed variables alone and subsequently using symbolic regression to recover explicit equations. Related to this is also the recent physics-informed neural-ODE-based approaches to system identification considered in \cite{Buisson-Fenet2023RecognitionODEs, Ghanem2024LearningMeasurements, Malani2023SomeInformation}. Also relevant in this setting are earlier works combining a Hamiltonian structure with a neural ODE training approach \cite{Zhong2024SymplecticControl, Sanchez-Gonzalez2019HamiltonianIntegrators}, although they only consider fully observed systems.

\section{Results}
\label{sec:results}
%For each model: short description, motivation, true Hamiltonian, system dimensionality, characteristics, observed variables, and generation of initial conditions.

Our neural Hamiltonian ODE method is designed for system identification and prediction of partially observed Hamiltonian systems. More specifically, we consider Hamiltonian systems
\begin{equation}
    \dot{\mathbf{x}} = S \nabla \mathcal{H}(\mathbf{x}),
    \label{eq:hamilton}
\end{equation}
where $\mathbf{x} = (\mathbf{q}, \mathbf{p})^{\mathsf{T}} \in \mathbb{R}^{2n}$ denotes the generalized position and momentum coordinates of a system with $n$ degrees of freedom, $\mathcal{H} : \mathbb{R}^{2n} \rightarrow \mathbb{R}$ is the Hamiltonian, and %$S \in \mathbb{R}^{2n} \times \mathbb{R}^{2n}$ is the canonical symplectic matrix 
% \begin{equation*}
% S =
% \begin{pmatrix}
% 0 & I \\
% -I & 0
% \end{pmatrix}.
% \end{equation*}
$S = \left(\begin{smallmatrix}
    0 & I \\
    -I & 0
\end{smallmatrix}\right) \in \mathbb{R}^{2n \times 2n}$ is the canonical symplectic matrix.
For a system of $N_p$ point masses in $d$ spatial dimensions, $n = N_p \, d$. The Hamiltonian is a conserved quantity of the system and typically represents the total energy. We define partially observed systems as systems where one or more of the $2d$ variables are completely unobserved at all times. %For most examples, we assume access to the initial condition of the full state space. However, in section \ref{sec:learning-initial-conditions}, we demonstrate that learning is also possible without full knowledge of the initial state. 

The general training procedure is presented in Figure \ref{fig:partially-observed-method}.
\begin{figure}[!b]
    \centering
    \includegraphics[width=\linewidth]{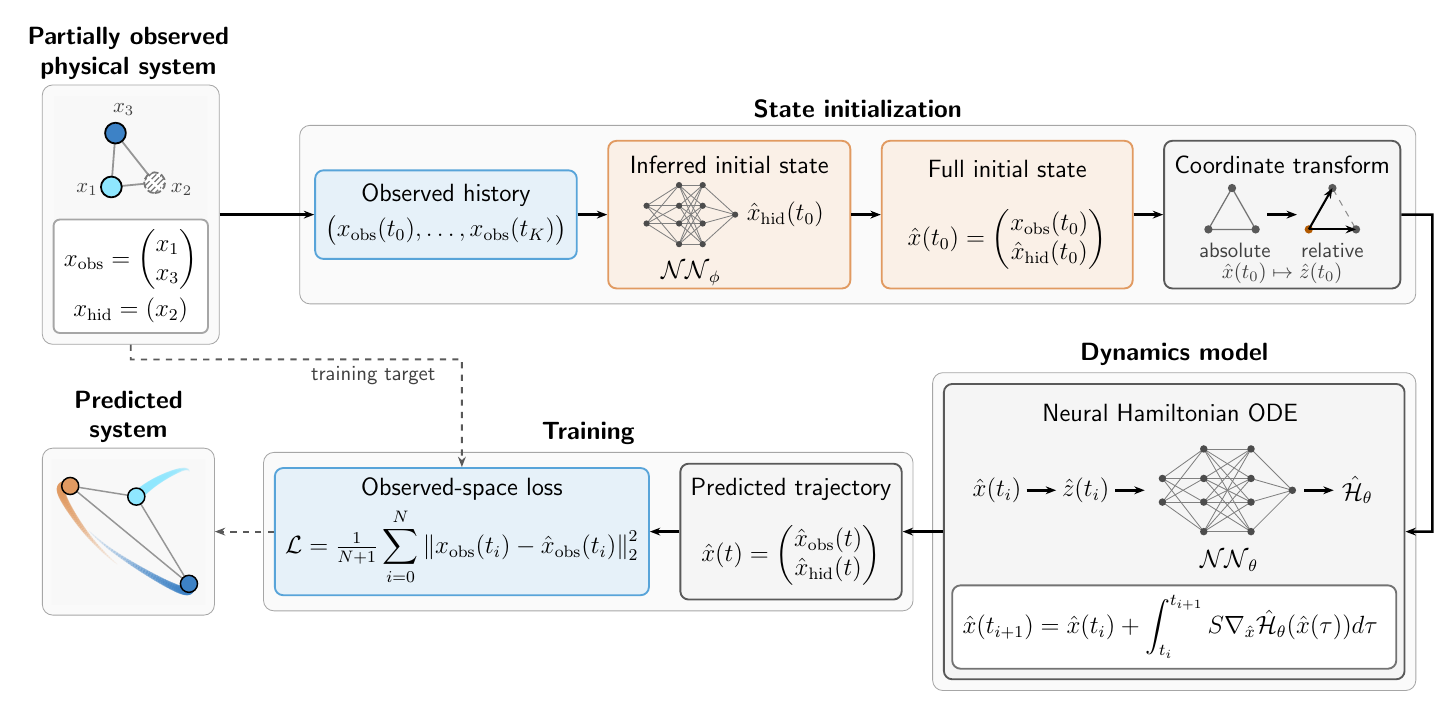}
    \caption{Training pipeline for neural Hamiltonian ODEs. From an observed history, a neural network infers the initial state of unobserved components, which is combined with the initial state of observed variables to form the full system state. The state is optionally transformed to symmetry-aware coordinates (e.g., relative distances) before being propagated through a neural Hamiltonian ODE to generate a trajectory. Training is performed by minimizing a loss function defined only on observed components of the predicted trajectory.}
    \label{fig:partially-observed-method} 
\end{figure}
The partially observed system is illustrated by three point masses (particles) connected by three springs, where we only observe two of the point masses. The first $K$ time steps of the observed trajectories $(\mathbf{x}_1(t), \mathbf{x}_3(t))^{\mathsf{T}}$ are used to learn the initial condition $\mathbf{x}_2(t_0)$ of the hidden point mass. When translational symmetry is assumed, we encode this prior by transforming position coordinates to pairwise distances and providing these as model inputs with the momentum coordinates. This yields a translation-invariant learned Hamiltonian, ensuring conservation of linear momentum. The neural Hamiltonian ODE model is then given by transforming the state vectors $\hat{x}(t_i)$ to appropriate coordinates $\hat{z}(t_i)$, depending on the priors, which are passed to a neural network $\mathcal{H}_\theta$ that approximates the Hamiltonian. The equations of motion are obtained by taking the gradient of $\mathcal{H}_\theta$ with respect to $\hat{x}(t_i)$, and the state at the next point in time, $\hat{x}(t_{i+1})$, is found through numerical integration with an ODE solver of choice. After unrolling full trajectories from each initial state in the training data set, the loss is evaluated across the observed dimensions only, so that $\hat{\mathbf{x}}_{\textrm{hid}}$ is left out of the computational graph. The influence of $\hat{\mathbf{x}}_{\textrm{hid}}$ on the training is therefore only indirect, via the evolution of the observed particles over time.

For all the presented examples, we assume that both the position and momentum are unobserved for one of $d$ particles, with $d$ being two or three. Initially, for all examples, we assume to know the initial state of all coordinates, avoiding inferring the initial hidden state, while Section \ref{sec:learning-initial-conditions} considers the complication of also learning the initial hidden state. Throughout Section \ref{sec:results}, a clear relationship is observed between performance and the amount of domain knowledge incorporated into the model. The various levels of physics-informed training are explained in more detail in Section \ref{sec:incorporating-physics}. The most important distinction is that we have trained two different types of neural Hamiltonian ODEs for all examples, referred to as NHODE$_{\textrm{tot}}$ and NHODE$_{\textrm{pot}}$ in the following. The former is a model set up to learn the total energy of the system, while the latter is a more physics-informed model where only the potential energy is learned by a neural network. This is done by assuming the Hamiltonian is separable, $\mathcal{H}(\mathbf{q}, \mathbf{p}) = T(\mathbf{p}) + V_\theta(\mathbf{q})$, and assuming the kinetic energy term is on the form $T(\mathbf{p}) = \mathbf{p}^2/2m$. The latter implies that we also assume the masses $m$ of the particles are known. For the NHODE$_{\textrm{pot}}$ models where the position coordinates are transformed to relative pairwise distances, the resulting Hamiltonian is rotation-invariant in addition to translation-invariant, implying conservation of both linear and angular momentum. An overview of the physical quantities conserved by each model type is provided in Table \ref{tab:overview_phys_quantities_conserved_by_models}. For a proof of these conserved properties, see Appendix \ref{sec:proofs}. 

\begin{table}[tbh]
    \centering
    \begin{tabular}{cccc}
        \toprule
         System & Total energy & Linear momentum & Angular momentum \\
         \midrule
         NHODE$_{\textrm{tot,abs}}$ & Yes & Not guaranteed & Not guaranteed \\
         NHODE$_{\textrm{tot,rel}}$& Yes & Yes & Not guaranteed \\
         NHODE$_{\textrm{pot,abs}}$ & Yes & Not guaranteed & Not guaranteed \\
         NHODE$_{\textrm{pot,rel}}$ & Yes & Yes & Yes \\
         \bottomrule
    \end{tabular}
    \caption{Overview of the physical quantities conserved by each model. The NHODE$_{\textrm{tot,abs}}$ and NHODE$_{\textrm{pot,abs}}$ models are trained on absolute coordinates, while NHODE$_{\textrm{pot,rel}}$ and NHODE$_{\textrm{tot,rel}}$ are trained on pairwise distances between particles where translational symmetry is assumed.}
    \label{tab:overview_phys_quantities_conserved_by_models}
\end{table}

In all examples, we compare the neural Hamiltonian ODEs to two different baseline models. The first is a standard neural ODE model, referred to as NODE$_{\textrm{vanilla}}$ in the figures and tables. The second is a physics-informed variant of neural ODEs where only the momentum derivatives are learned by the model, referred to as NODE$_{\textrm{phys}}$. The motivation behind this is to investigate if physical priors lead to better performance also in the NODE framework. While NODE$_{\textrm{vanilla}}$ is a good baseline for NHODE$_{\textrm{tot}}$, the NODE$_{\textrm{phys}}$ model is a fairer comparison for the NHODE$_{\textrm{pot}}$ model.

The following sections present three different physical examples with increasing dimensionality and complexity; see Figure \ref{fig:all-example-schematics}. For all examples, and for each of the four different methods, we have trained 10 models using different random initializations. If not otherwise stated, the presented results are the mean value of the predictions from these. Across all examples, we observe that increasing the amount of embedded physical structure systematically improves both the accuracy and long-horizon stability of the learned dynamics, and is essential for learning in the more complex and chaotic settings.

\begin{figure}[tbh]
    \centering
    \includegraphics[width=\linewidth]{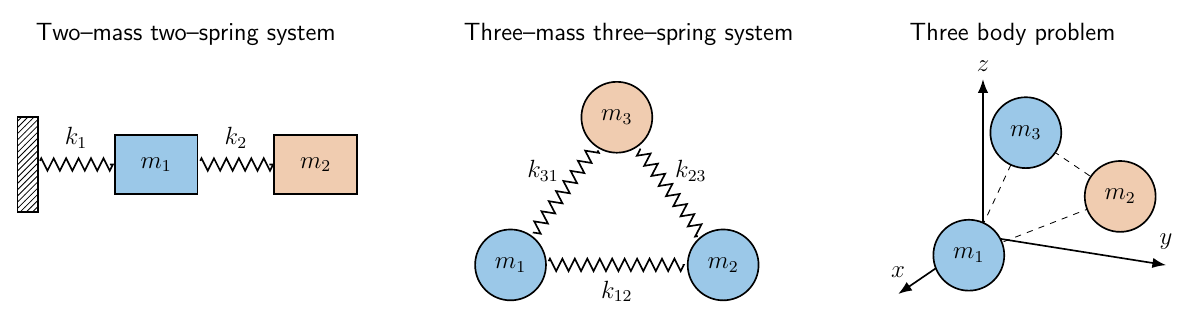}
    \caption{Dynamical systems investigated in this study. From left to right, we have a linear one dimensional two--mass two--spring system, a nonlinear three--mass three--spring system in two dimensions, and the chaotic three body problem in three dimensions. The blue point masses are observed, while the orange point masses are unobserved during training.}
    \label{fig:all-example-schematics}
\end{figure}

\subsection{Two--mass two--spring system}

The first example we consider is the two--mass two--spring system in one dimension. The first spring connects a stationary wall to the first point mass, and the second spring connects the first point mass to the second point mass, as illustrated in Figure \ref{fig:all-example-schematics}. Both springs are linear. The true Hamiltonian for this system is
\begin{equation}
\mathcal{H}(\mathbf{q},\mathbf{p})
= 
\underbrace{\sum_{i=1}^{2}\frac{p_i^2}{2m_i}}_{\textrm{kinetic energy } T}
+ 
\underbrace{\sum_{j=1}^{2}\frac{k_j}{2}\,e_j^2(\mathbf{q}),}_{\textrm{potential energy } V}
\label{eq:two-linear-mass-spring-hamiltonian}
\end{equation}
where
\begin{equation*}
\begin{aligned}
e_1(\mathbf{q}) &= |q_1| - \ell_1,\\
e_2(\mathbf{q}) &= |q_2 - q_1| - \ell_2.
\end{aligned}
\label{eq:two-mass-spring-extensions}
\end{equation*}
Here, $q_i$ and $p_i$ denote the generalized position and momentum of particle $i$, respectively, and $k_j$ and $l_j$ are the spring constant and rest length of spring $j$, respectively. For this system, we assume that the second particle is unobserved. The initial conditions across the trajectories in the training and test datasets were generated randomly from uniform distributions, with $x_1(0) \sim \mathcal{U}(0.2, 0.8)$, $x_2(0) \sim \mathcal{U}(0.9, 1.4)$, and $v_1(0), v_2(0) \sim \mathcal{U}(-0.7, 0.7)$, where $v_i$ is the velocity of particle $i$. The masses were set to $m_1 = 1.0$ and $m_2 = 1.2$, the spring constants were set to $k_1 = 3.0$ and $k_2 = 5.0$, and the rest lengths of the springs were set to $\ell_1 = 0.4$, $\ell_2 =  0.6$.

The predicted positions and momenta over time for a random test initial condition of the two point masses are presented in Figure \ref{fig:two-mass-spring-pred-trajectories-and-energy-error}, as well as the mean squared error (MSE) for the observed and unobserved point mass and the absolute error (AE) of the predicted Hamiltonian.

\begin{figure}[tbh]
    \centering
    \includegraphics[width=\linewidth]{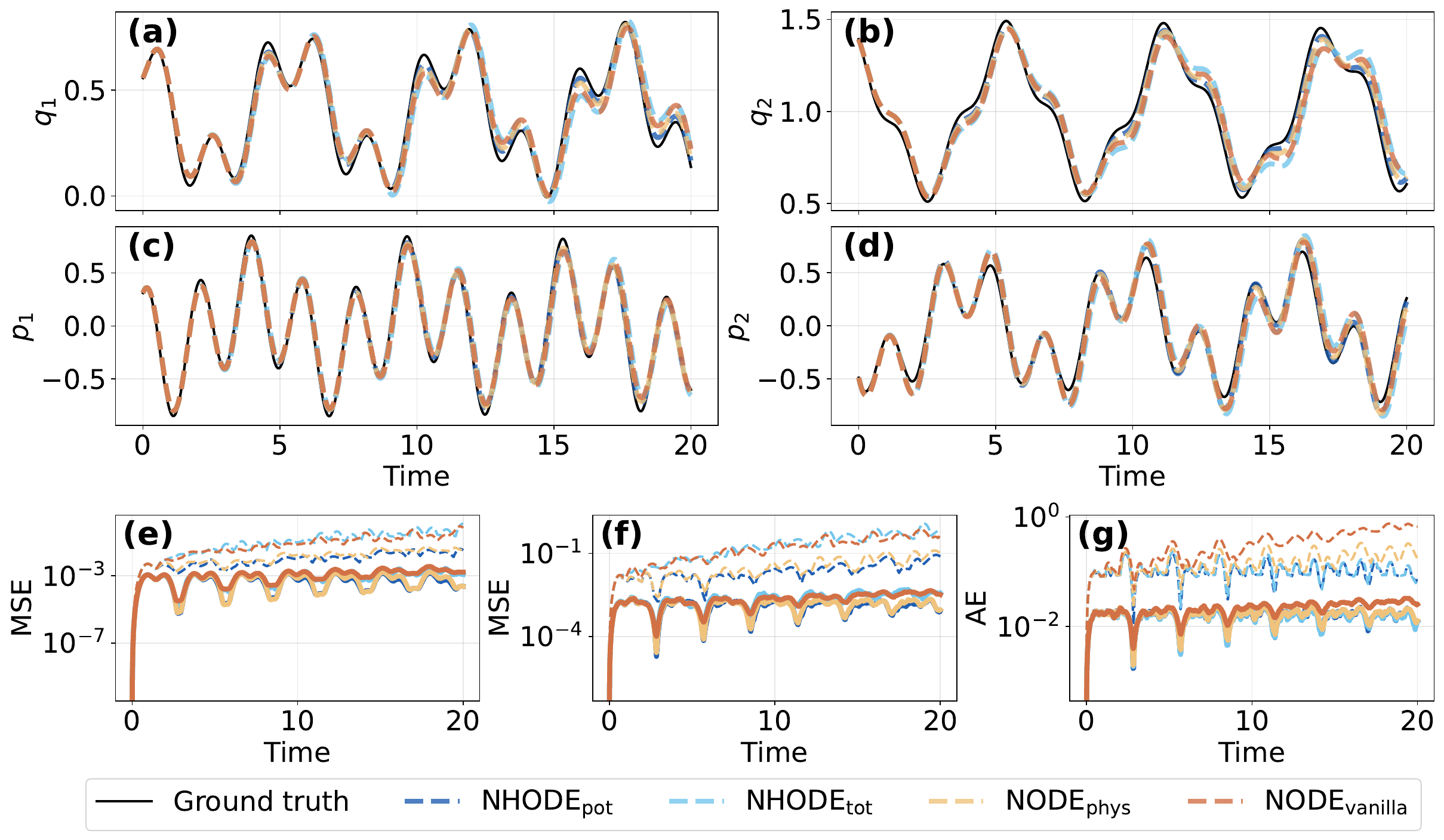}
    \caption{Predictions for the linear two-mass two-spring system. Panel (a) shows the predicted position of the observed point mass, panel (b) the predicted position of the unobserved point mass, panel (c) the predicted momentum of the observed point mass, and panel (d) the predicted momentum of the unobserved point mass. Panel (e) and (f) shows the median of the mean squared error for the observed and unobserved point mass, respectively, and (g) shows the median of the absolute total energy error. The latter is calculated by evaluating the true Hamiltonian on the predicted trajectories. The dashed lines in (e--g) indicate the maximum errors.}
    \label{fig:two-mass-spring-pred-trajectories-and-energy-error}
\end{figure}

\subsection{Triangular nonlinear mass--spring system}

Next, we consider a two-dimensional three--mass three--spring system where the three point masses are connected in a triangular manner, as illustrated in Figure \ref{fig:all-example-schematics}. All three springs are nonlinear, and chosen such that the force on each particle is proportional to the spring extension cubed. The true Hamiltonian therefore scales with the fourth power of the spring extension:
\begin{equation}
\mathcal{H}(\mathbf{r},\mathbf{p})
=
\underbrace{\sum_{i=1}^{3}\frac{\|\mathbf{p}_i\|^2}{2m_i}}_{\textrm{kinetic energy } T}
\;+\;
\underbrace{\sum_{(i,j)\in\mathcal{E}} \frac{k_{ij}}{4}\,e_{ij}(\mathbf{r})^{4}}_{\textrm{potential energy } V},
\label{eq:three-nonlinear-mass-spring-hamiltonian-compact}
\end{equation}
where
\begin{equation*}
\begin{aligned}
e_{ij}(\mathbf{r}) &:= \|\mathbf{r}_j-\mathbf{r}_i\| - L_{ij},
\quad (i,j)\in\mathcal{E}.
\end{aligned}
\label{eq:three-mass-spring-extension}
\end{equation*}
Here, $\mathbf{r}_i=(x_i,y_i)$ and $\mathbf{p}_i=(p_{x_i},p_{y_i})$ denote the position and momentum of point mass $i$, respectively, $L_{ij}$ is the rest length of the spring connecting point masses $i$ and $j$, and $\mathcal{E}=\{(1,2),(1,3),(2,3)\}$ is the edge set of the triangular spring network.

The initial conditions were generated randomly from uniform distributions, with $x_1(0) \sim \mathcal{U}(0.3, 0.7)$, $y_1(0) \sim \mathcal{U}(1.0, 1.4)$, $x_2(0) \sim \mathcal{U}(0.1, 0.4)$, $y_2(0) \sim \mathcal{U}(0.1, 0.5)$, $x_3(0) \sim \mathcal{U}(0.6, 0.9)$, $y_3(0) \sim \mathcal{U}(0.1, 0.5)$ and $v_1(0), v_2(0), v_3(0) \sim \mathcal{U}(-0.7, 0.7)$. The initial velocities were then adjusted to ensure zero center-of-mass momentum. The spring constants were set to $k_1 = k_2 = k_3 = 1.0$, and the rest lengths of the springs $L_{12} = L_{13} = L_{23} = 0.9$. All masses $m_i$ were set to 1. Since this system is translational invariant, all models were trained on relative pairwise distances.

\begin{figure}[bh]
    \centering
    \includegraphics[width=\linewidth]{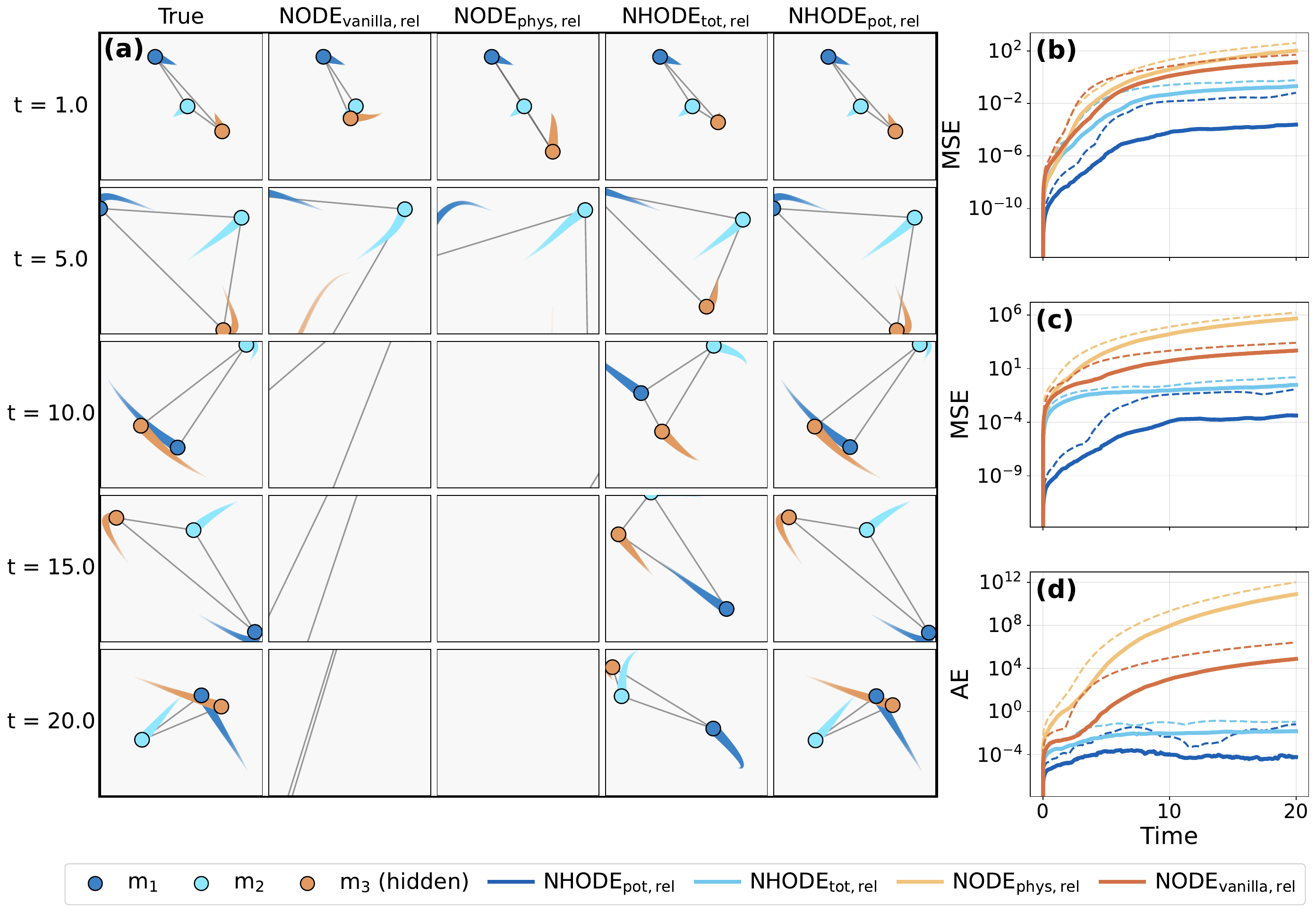}
    \caption{Model prediction for the triangular nonlinear mass-spring system. Panel (a) shows a grid of snapshots, where each column contains snapshots of the system at 5 different time steps. The left column shows the ground truth, while column 2-5 gives the mean predictions from 
    the models NODE$_\textrm{vanilla,rel}$, NODE$_\textrm{phys,rel}$, NHODE$_\textrm{tot,rel}$ and NHODE$_\textrm{pot,rel}$, respectively. Panel (b) shows the median MSE of the observed point masses, panel (c) the median MSE of the unobserved point mass, and panel (d) shows the median absolute total energy error. The dashed lines in (b--d) indicate the maximum errors.}
    \label{fig:three-nonlinear-mass-spring-combined-snapshot-and-errors}
\end{figure}

The predicted system configurations at various time points starting from a random test initial condition are presented in Figure \ref{fig:three-nonlinear-mass-spring-combined-snapshot-and-errors}(a). Figure \ref{fig:three-nonlinear-mass-spring-combined-snapshot-and-errors} (b--d) shows the prediction errors over time, presented as MSE on the observed point masses, MSE on the unobserved point mass, and absolute error of the total energy, i.e.\ Hamiltonian. The exact errors are also given in Table \ref{tab:triangular-nonlinear-mass-spring-error-table}, where we compare the median errors on unseen initial states rolled out over an interval corresponding to the length of the trajectories in the training dataset and a longer testing timespan.

\begin{table}[tbh]
\centering
\caption{Median prediction errors for the triangular nonlinear three--mass three-spring system over short- and long-horizon rollouts. The short interval corresponds to the length of the training window, while the long interval is 20 times longer.}
\begin{tabular}{lcccc}
\toprule
& \multicolumn{2}{c}{$t \in [0,1]$} & \multicolumn{2}{c}{$t \in [0,20]$} \\
\cmidrule(lr){2-3} \cmidrule(lr){4-5}
Method & Observed & Unobserved & Observed & Unobserved \\
\midrule
NHODE$_{\textrm{pot,rel}}$ & $9.40\mathit{e-}10$ & $8.00\mathit{e-}10$ & $2.44\mathit{e-}04$ & $4.24\mathit{e-}04$ \\
NHODE$_{\textrm{tot,rel}}$    & $2.35\mathit{e-}07$ & $2.85\mathit{e-}03$ & $2.06\mathit{e-}01$ & $3.00\mathit{e-}01$ \\
NODE$_{\textrm{phys,rel}}$   & $2.46\mathit{e-}07$ & $9.84\mathit{e-}02$ & $1.04\mathit{e+}02$ & $4.60\mathit{e+}05$ \\
NODE$_{\textrm{vanilla,rel}}$         & $9.35\mathit{e-}07$ & $3.92\mathit{e-}02$ & $1.38\mathit{e+}01$ & $4.74\mathit{e+}02$ \\
\bottomrule
\end{tabular}
\label{tab:triangular-nonlinear-mass-spring-error-table}
\end{table}

\subsection{Three-body problem}

We consider the three-body problem, the motion of three point masses interacting via mutual gravitational attraction, as a final example to demonstrate the full benefit of our method. As a frictionless classical mechanical system, it conserves energy, linear momentum, and angular momentum, while exhibiting chaotic and dynamically rich behaviour that makes long-horizon prediction particularly challenging. The Hamiltonian is given by
\begin{equation}
\mathcal{H}(\mathbf r,\mathbf p)
=
\underbrace{\sum_{i=1}^{3}\frac{\|\mathbf p_i\|^{2}}{2m_i}}_{\text{kinetic energy }T}
\;-\;
\underbrace{\left(
\sum_{1\le i<j\le 3}
\frac{G\,m_i m_j}
{\sqrt{\|\mathbf r_i-\mathbf r_j\|^{2}+\varepsilon^{2}}}
\right)}_{\text{potential energy }V}
\end{equation}
where $\mathbf r_i$ and $\mathbf p_i$ denote the position and momentum of body $i$, $m_i$ is its mass and  $G$ is the gravitational constant. The potential is regularized by using a Plummer softening length $\varepsilon$ \cite{Aarseth1963DynamicalGalaxies, Dehnen2001TowardsError}, which improves the stability of the numerical simulations by smoothing the potential at small inter-body distances.
In our setup, we let the masses be equal, providing a symmetric and non-trivial test problem. We use relatively large values of $\varepsilon$ to improve numerical stability, while not changing the fundamental qualitative properties of the problem. 

As for the other examples, the initial conditions were picked randomly, with $x_1(0) \sim \mathcal{U}(0.3, 0.7)$, $y_1(0) \sim \mathcal{U}(1.0, 1.4)$, $z_1(0) \sim \mathcal{U}(-0.1, 0.1)$, $x_2(0) \sim \mathcal{U}(0.1, 0.4)$, $y_2(0) \sim \mathcal{U}(0.1, 0.5)$, $z_2(0) \sim \mathcal{U}(-0.1, 0.1)$, $x_3(0) \sim \mathcal{U}(0.6, 0.9)$, $y_3(0) \sim \mathcal{U}(0.1, 0.5)$, $z_3(0) \sim \mathcal{U}(-0.1, 0.1)$ and $v_1(0), v_2(0), v_3(0) \sim \mathcal{U}(-0.3, 0.3)$. The initial positions were subsequently adjusted to recenter the center of mass to origin, and the initial velocities were adjusted to ensure zero total momentum. The gravitational constant was set to $G=1$, and $m_1 = m_2 = m_3 = 1$. We selected $\varepsilon = 0.6$ based on an epsilon range test to determine the optimal value of the Plummer softening parameter.
The test was performed by training 10 models on different values of $\varepsilon$, and comparing the average test error. The results from this test are provided in Appendix \ref{sec:plummer-parameter-test}.

Figure \ref{fig:three-body-combined-snapshot-and-errors} shows the predicted trajectories from a random initial condition for the four different methods; NHODE$_{\textrm{pot,rel}}$, NHODE$_{\textrm{tot,rel}}$, NODE$_{\textrm{phys,abs}}$ and NODE$_{\textrm{vanilla,abs}}$. Note that the NHODE models here are trained on relative coordinates while the neural ODE models were trained on absolute coordinates, as we found that this gave the best results within each framework. The prediction errors are presented both graphically in Figure \ref{fig:three-body-combined-snapshot-and-errors} and with numbers in Table \ref{tab:three-body-problem-error-table}, where we have included results for NHODE trained on absolute positions and NODE trained on relative coordinates as well.

\begin{figure}[h!]
    \centering
    \includegraphics[width=\linewidth]{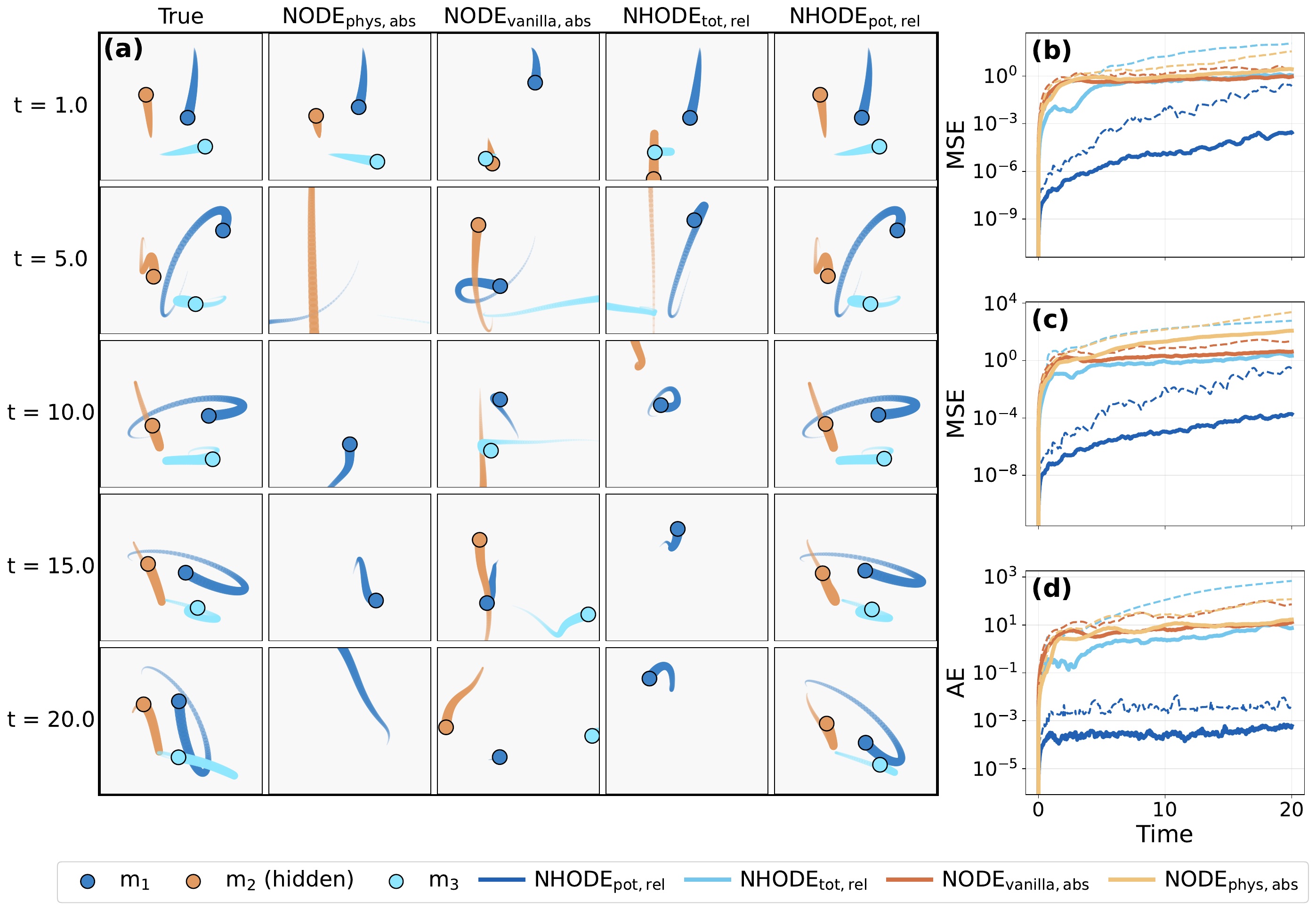}
    \caption{Model prediction for the three body problem. Panel (a) shows a grid of snapshots, where each column contains snapshots of the system at 5 different time steps. The left column shows the ground truth, while column 2-5 gives the mean predictions from the models NODE$_\textrm{vanilla,abs}$, NODE$_\textrm{phys,abs}$, NHODE$_\textrm{tot,rel}$ and NHODE$_\textrm{pot,rel}$, respectively. Panel (b) shows the median MSE of the observed point masses, panel (c) the median MSE of the unobserved point mass, and panel (d) shows the median absolute total energy error. The dashed lines in (b--d) indicate the maximum errors.}
    \label{fig:three-body-combined-snapshot-and-errors}
\end{figure}

\begin{table}[tbh]
\centering
\caption{Prediction errors for the three body problem over short- and long-horizon rollouts, including comparison of the methodologies when trained on absolute and relative coordinates.}
\begin{tabular}{lcccc}
\toprule
& \multicolumn{2}{c}{$t \in [0,1]$} & \multicolumn{2}{c}{$t \in [0,20]$} \\
\cmidrule(lr){2-3} \cmidrule(lr){4-5}
Method & Observed & Unobserved & Observed & Unobserved \\
\midrule
NHODE$_{\textrm{pot,rel}}$    & $5.67\mathit{e-}08$ & $5.96\mathit{e-}08$ & $2.76\mathit{e-}04$ & $1.87\mathit{e-}04$ \\
NHODE$_{\textrm{pot,abs}}$    & $3.96\mathit{e-}02$ & $4.45\mathit{e-}02$ & $7.00\mathit{e-}01$ & $1.55\mathit{e+}01$ \\
NHODE$_{\textrm{tot,rel}}$    & $6.58\mathit{e-}03$ & $5.97\mathit{e-}02$ & $1.12\mathit{e+}00$ & $2.16\mathit{e+}00$ \\
NHODE$_{\textrm{tot,abs}}$    & $2.12\mathit{e-}01$ & $1.14\mathit{e+}00$ & $2.68\mathit{e+}01$ & $7.50\mathit{e+}01$ \\
NODE$_{\textrm{phys,rel}}$    & $1.87\mathit{e-}03$ & $2.56\mathit{e+}00$ & $1.86\mathit{e+}03$ & $8.44\mathit{e+}03$ \\
NODE$_{\textrm{phys,abs}}$    & $3.85\mathit{e-}02$ & $1.13\mathit{e-}01$ & $2.73\mathit{e+}00$ & $1.19\mathit{e+}02$ \\
NODE$_{\textrm{vanilla,rel}}$ & $2.20\mathit{e-}03$ & $1.09\mathit{e+}00$ & $4.29\mathit{e+}00$ & $8.30\mathit{e+}01$ \\
NODE$_{\textrm{vanilla,abs}}$ & $1.11\mathit{e-}01$ & $4.24\mathit{e-}01$ & $9.36\mathit{e-}01$ & $4.10\mathit{e+}00$ \\
\bottomrule
\end{tabular}
\label{tab:three-body-problem-error-table}
\end{table}

\colorlet{grey}{white!95!black} 

\definecolor{my_red}{HTML}{C2482A} 
\colorlet{my_red}{my_red!30!white}

\definecolor{my_orange3}{HTML}{D27146}
\colorlet{my_orange3}{my_orange3!30!white}

\definecolor{my_orange2}{HTML}{E19A61}
\colorlet{my_orange2}{my_orange2!30!white}

\definecolor{my_orange1}{HTML}{F0C37C}
\colorlet{my_orange1}{my_orange1!30!white}

\definecolor{my_yellow}{HTML}{FFEC97}
\colorlet{my_yellow}{my_yellow!30!white}

\definecolor{my_blue1}{HTML}{8FE7FF} 
\colorlet{my_blue1}{my_blue1!30!white}

\definecolor{my_blue2}{HTML}{74C6EC} 
\colorlet{my_blue2}{my_blue2!30!white}

\definecolor{my_blue3}{HTML}{58A4D9} 
\colorlet{my_blue3}{my_blue3!30!white}

\definecolor{my_blue4}{HTML}{3D82C6} 
\colorlet{my_blue4}{my_blue4!30!white}

\definecolor{my_blue5}{HTML}{2160B3} 
\colorlet{my_blue5}{my_blue5!30!white}

\newcommand{\shadecell}[2]{%
  \tikz[baseline=(X.base)]{
    \node[fill=#1, fill opacity=.5, text opacity=1, inner sep=2pt] (X) {#2};
  }%
}

The results for all three examples considered in the previous subsections are summarized in Table \ref{tab:result-summary-table-with-colors}. The colors reflect the performance of each method when applied to each system, ranging from dark blue (low error) to orange/red (high error). For the two--mass two--spring system, all four methods are trained on absolute positions, while for the nonlinear three--mass three-spring system, translational symmetry was assumed, so all four methods are trained on relative coordinates. In the three body problem, the NHODE models are trained on relative coordinates and the NODE models on absolute positions, as this gave the best performance.

\begin{table}[tbh]
    \centering
    \caption{Comparison of method performance
 across all investigated systems. The errors in the table are the median of the mean squared errors for both the observed and unobserved dimensions after time $t=20$, corresponding to $2000$ rollout steps with $dt=0.01$. The color scale reflects error magnitude, transitioning from dark blue (low error) to orange/red (high error).}
    \begin{tabular}{lcccc}
         \toprule
         System & NODE$_{\textrm{vanilla}}$ & NODE$_{\textrm{phys}}$ & NHODE$_{\textrm{tot}}$ & NHODE$_{\textrm{pot}}$ \\
         \midrule
         Two-linear mass--spring & \cellcolor{my_blue2} $3.11\mathit{e-}03$ & \cellcolor{my_blue3} $7.28\mathit{e-}04$ & \cellcolor{my_blue2}  $2.82\mathit{e-}03$ & \cellcolor{my_blue3}  $6.32\mathit{e-}04$ \\
         Three-nonlinear mass--spring & \cellcolor{my_orange2} $1.66\mathit{e+}02$ &\cellcolor{my_red} $1.54\mathit{e+}05$& \cellcolor{my_blue1} $2.61\mathit{e-}01$& \cellcolor{my_blue4} $3.16\mathit{e-}04$\\
         Three-body problem & \cellcolor{my_yellow} $2.06\mathit{e+}00$ & \cellcolor{my_orange1} $4.37\mathit{e+}01$ & \cellcolor{my_yellow} $1.39\mathit{e+}00$ & \cellcolor{my_blue5} $2.75\mathit{e-}04$ \\
         \bottomrule
    \end{tabular}
    \label{tab:result-summary-table-with-colors}
\end{table}

\noindent

\subsection{Learning initial conditions}
\label{sec:learning-initial-conditions}

To demonstrate the initial-state inference step of the neural Hamiltonian ODE framework, we consider again the two--mass two--spring system \eqref{eq:two-linear-mass-spring-hamiltonian}. We use a fully-connected neural network as an encoder to infer the initial conditions of the hidden point mass, given the first $K$ steps of the trajectory of the observed point mass. We train the encoder and the Hamiltonian neural network simultaneously using the same global loss function. This is explained in more detail in Section \ref{sec:neuralODE-training-and-learning-IC}.

Figure \ref{fig:two-mass-spring-learning-ic-pred-trajectories-and-energy-error} presents the predicted evolution over time of both positions and both momenta, from a random test initial condition. Figure \ref{fig:two-mass-spring-learning-ic-pred-trajectories-and-energy-error} also shows the median mean squared errors over time of the observed and unobserved particles, as well as the median of the absolute energy error, computed by evaluating the true Hamiltonian on the predicted trajectories.

\begin{figure}[hbt]
    \centering
    \includegraphics[width=\linewidth]{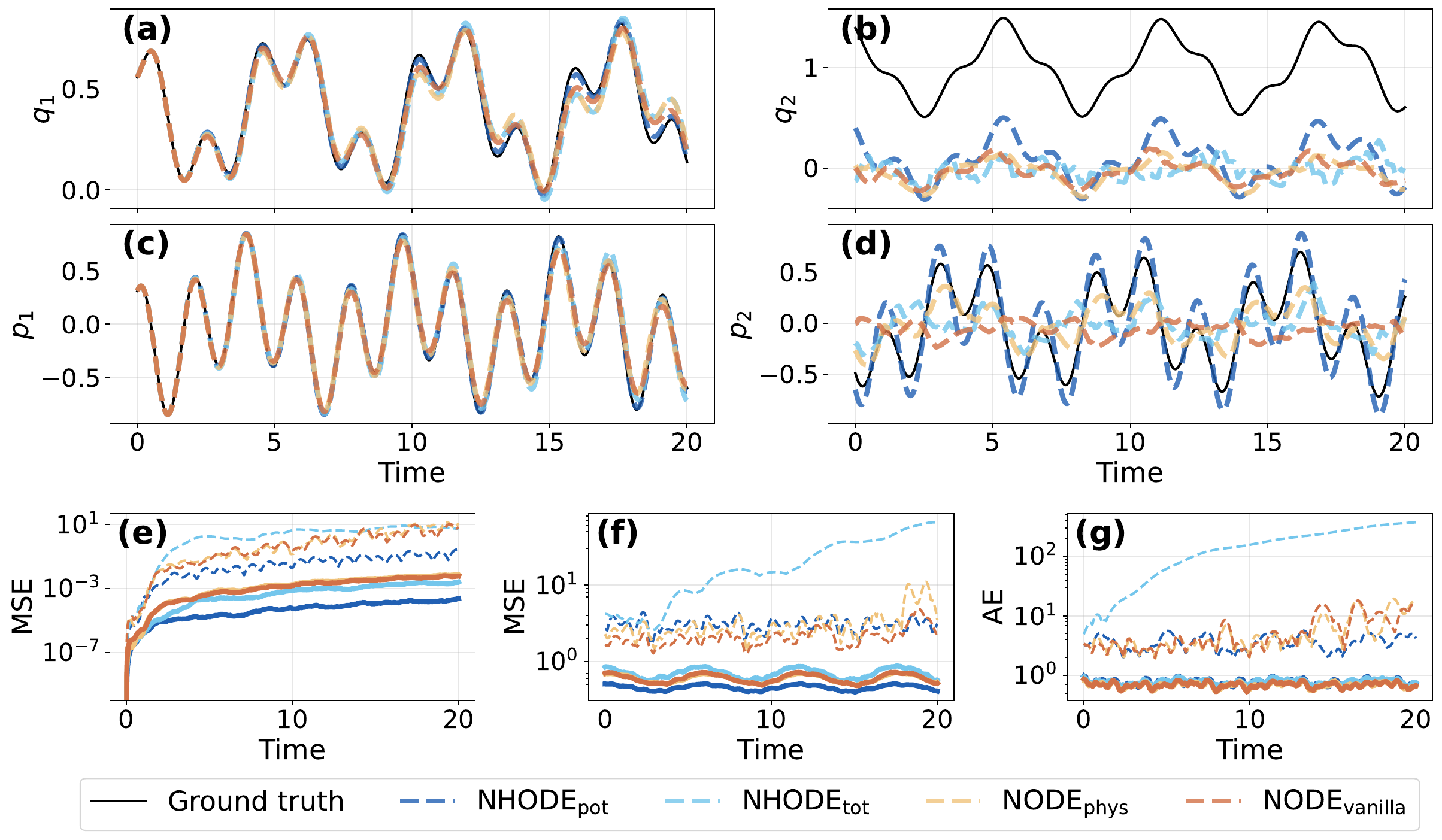}
    \caption{Model predictions for the linear two--mass two--spring system when initial conditions of the hidden point mass are learned. Panel (a) and (c) shows the predicted position and momentum of the point mass that was observed during training, respectively, and panel (b) and (d) shows the predicted position and momentum of the hidden point mass, respectively, where the initial condition is inferred before the trajectory is rolled out. Panel (e) and (f) shows the median of the mean squared error for the observed and unobserved point mass, respectively, and (g) shows the median of the absolute total energy error. The latter is calculated by evaluating the true Hamiltonian on the predicted trajectories. The dashed lines in (e--g) indicate the maximum errors.}
    \label{fig:two-mass-spring-learning-ic-pred-trajectories-and-energy-error}
\end{figure}

\section{Discussion}

One of the central motivations behind physics-informed machine learning is that physics-informed models can compensate for missing information in the available data and succeed where purely data-driven models will fail \cite{Karniadakis2021Physics-informedLearning}. However, few works in this emerging field demonstrate this as clearly as we do here. We consider the case where one particle in the system is completely unobserved, a common situation in real-world settings. This is a fundamentally more challenging task than having sparse or unevenly sampled data or data with noise, but where the data still cover all degrees of freedom, which are the more typical examples of imperfect data considered in the literature. While computationally demanding, our proposed framework is generalizable to more complex and larger systems, and to the case of lacking observations on more than one particle. 

Across the three example systems studied, a clear pattern emerges. In the linear one-dimensional mass--spring system, the differences between the NHODE models and the data-driven baselines are small, and all models succeed in learning the system dynamics. However, as complexity increases and the dynamics become more challenging, this gap widens substantially. Notably, for the chaotic three-body problem, the only model variant that consistently produce non-diverging trajectories over long-horizon rollouts is the most physics-informed one, where the learned Hamiltonian, linear and angular momentum are all conserved. 
Since a key characteristic of chaotic systems is that minor perturbations in initial conditions, model parameters or numerical solver will lead to exponentially diverging trajectories over time, there is a limit to how long any method practically could match a reference trajectory. Given this limitation, the long-term accuracy and stability observed for the NHODE$_\textrm{pot,rel}$ model suggest that incorporating stronger physical priors could extend the window over which predictions remain useful. For longer time horizons where the predicted trajectories are expected to diverge from the true system, our results also indicate that adding physical constraints prevents the learned dynamics from blowing up, which is a common problem when learning chaotic systems (see Appendix \ref{sec:super-long-rollouts-three-body-problem}). 

A key challenge when learning systems where some components are completely unobserved is that the latent trajectories might not be identifiable from the observed dynamics alone.  
Although physical priors compensates for the lacking information by shrinking the solution space of the model, there could in general be more than one unique solution for the latent variables that matches the observed coordinates. Our results on initial condition learning for the hidden variables suggests that the NHODE model in such cases learns a latent state that is physically consistent and gives accurate predictions for the observed variables, but this learned reconstruction is not necessarily equal to the true hidden state. A simple derivation for the linear two--mass two--spring system shows that unless the rest length of the spring connecting the two masses is known, there is no unique solution for the latent dynamics: We construct an additional Hamiltonian $\tilde{\mathcal{H}}(\tilde{\mathbf q},\mathbf p)$, where $\tilde{\mathbf q} = (q_1, \tilde{q}_2)$ and $\tilde{q}_2 = q_2 + a$, and require that 
\begin{equation}
    \nabla_{\mathbf q}\mathcal{H}(\mathbf q,\mathbf p)
    =
    \nabla_{\tilde{\mathbf q}}\tilde{\mathcal{H}}(\tilde{\mathbf q},\mathbf p).
    \label{eq:uniqueness-by-equal-derivatives-of-hamiltonian}
\end{equation}
In other words, we change the position of the hidden point mass by a distance $a$ and require that the two Hamiltonians still give the same equations of motion for the observed point mass. For the new system governed by $\tilde{\mathcal{H}}$, we also change the rest length and spring constant of the spring connecting the two point masses to $\tilde{l}_2$ and $\tilde{k}_2$, respectively.
Since only the potential energy term in \eqref{eq:two-linear-mass-spring-hamiltonian} depends on positions, \eqref{eq:uniqueness-by-equal-derivatives-of-hamiltonian} is equivalent to solving 
$\nabla_{\mathbf q}\mathcal{V}(\mathbf q) 
= 
\nabla_{\tilde{\mathbf q}}\tilde{\mathcal{V}}(\tilde{\mathbf q})$,
where
\begin{equation*}
    \begin{aligned}
        \mathcal{V}(q_1, q_2) &= \frac{k_1}{2}(|q_1| - l_1)^2 + \frac{k_2}{2}(|q_2 - q_1| - l_2)^2 \\
        \tilde{\mathcal{V}}(q_1, \tilde{q}_2) &= \frac{k_1}{2}(|q_1| - l_1)^2 + \frac{\tilde{k}_2}{2}(|\underbrace{q_2 + a}_{\tilde{q}_2} - q_1| - \tilde{l}_2)^2.
    \end{aligned}
\end{equation*}
Solving for $a$ gives the following expression, 
\begin{equation}
    a = \left( \frac{k_2}{\tilde{k}_2} - 1\right)(q_2 - q_1) +
    \begin{cases} 
        \frac{k_2}{\tilde{k}_2}l_2 + \tilde{l}_2 & \text{if } (q_2 - q_1) < 0 \text{ and } (q_2 + a - q_1) > 0, \\
        \frac{k_2}{\tilde{k}_2}l_2 - \tilde{l}_2 & \text{if } (q_2 - q_1) < 0 \text{ and } (q_2 + a - q_1) < 0,\\
        - \frac{k_2}{\tilde{k}_2}l_2 + \tilde{l}_2 & \text{if } (q_2 - q_1) > 0 \text{ and } (q_2 + a - q_1) > 0,\\
        - \frac{k_2}{\tilde{k}_2}l_2 - \tilde{l}_2 & \text{if } (q_2 - q_1) > 0 \text{ and } (q_2 + a - q_1) < 0,\\
    \end{cases}
    \label{eq:no-unique-solution-learning-ic-double-mass-spring}
\end{equation}
meaning that a non-zero shift in the initial position of the second point mass can be compensated by modifying the rest length and/or spring constant, while inducing the exact same dynamics for the observed point mass.
In general, recovering the true latent dynamics would require additional information, e.g.\ known system parameters or occasional measurements of the hidden components. This is also supported by our results, which indicate that when the full initial state is known, the NHODE model learns both the observed and latent dynamics with high accuracy.

Although the NHODE method builds on HNNs and concerns only the class of canonical Hamiltonian systems, a similar approach can easily be extended to other physics structures. Canonical Hamiltonian systems can be generalised to invariant-preserving systems and further to port- or pseudo-Hamiltonian systems, and the NHODE method could be generalised accordingly, building on corresponding extensions of HNN \cite{Desai2021Port-HamiltonianSystems, Eidnes2023Pseudo-HamiltonianForces}. The Lagrangian formalism, and the corresponding method called Lagrangian neural networks (LNN) \cite{Cranmer2020LagrangianNetworks}, would naturally make way for a method called neural Lagrangian ODEs (NLODE), with a similar training pipeline as that of NHODE. Similarly, extensions of HNNs and LNNs that incorporate graph structures for interacting systems \cite{Sanchez-Gonzalez2019HamiltonianIntegrators, Cranmer2020DiscoveringBiases, Cranmer2020LagrangianNetworks} would integrate cleanly with our proposed methodology. Furthermore, soft-constraining approaches like physics-informed neural networks (PINNs) \cite{Lagaris1998ArtificialEquations, Raissi2019Physics-informedEquations} could also be applied, either in addition to or instead of the hard-constrained physics of NHODE. In the latter case, this would mean a standard neural ODE method combined with our initial state decoder and an additional physics term in the loss function. Moreover, the typical data challenges of noise and sparse observations have not been considered in this paper, but the NHODE framework is set up to handle these through the neural ODE training approach. We expect that models with stronger physical priors would be better equipped to handle such data limitations.

From an application perspective, the framework demonstrated by the NHODE method is relevant in settings where i) a dynamical system is only partially observed, ii) some kind of physical structure is known, and iii) the goal is to make stable predictions over time. Examples include industrial monitoring where only a subset of sensors is available or reliable, mechanical systems in robotics where some internal states are not measured, and scientific experiments where some degrees of freedom cannot be directly observed but are coupled to measured quantities through known conservation laws. In such scenarios, the primary objective is often to obtain stable and physically consistent predictions for the observed components, while the details of the latent dynamics might be of secondary importance.

\section{Methods}
\label{sec:methods}

We model our dynamical systems using Hamiltonian neural networks, combined with a training procedure inspired by neural ODEs. The central idea is to embed physical structure through the Hamiltonian framework, while allowing some components of the state remain completely unobserved during training. Additional physical constraints are added through symmetry-aware coordinate transformations and a separable energy formulation. The general training procedure is described in Algorithm \ref{alg:training}. In our implementation, the numerical integration step 
\begin{equation}
\mathrm{ODESolve}(f_\theta,\hat{\mathbf{x}}_i,t_i,t_{i+1}) \approx \hat{\mathbf{x}}_i + \int_{t_i}^{t_{i+1}}f_\theta(\hat{\mathbf{x}}(\tau)) \, d\tau
\end{equation}
was performed using a fifth-order Runge--Kutta solver with adaptive step size control~\cite{Tsitouras2011RungeKuttaAssumption} from Kidger's diffrax-library~\cite{Kidger2021OnEquations}. The parameter update OptimStep was performed using the Adam optimizer~\cite{Kingma2017Adam:Optimization}, with $s$ holding the running first- and second-moment estimates of $\nabla_\theta \mathcal{L}$ or $\nabla_\phi \mathcal{L}$. See Appendix \ref{sec:experimental-details} for more experimental details. Here and in the following, we have used the notation $\hat{\mathbf{x}}_i = \hat{\mathbf{x}}(t _i)$ for the predicted state at time step $t _i$.

\begin{algorithm}[H]
\caption{Training a neural Hamiltonian ordinary differential equation}
\KwIn{Observed trajectory $\mathbf{X}_{\mathrm{obs}} = \{\mathbf{x}_{\mathrm{obs}}(t_i)\}_{i=0}^N$, set of observed indices $\mathcal I_{\mathrm{obs}}$, number of observed time steps $K$ to infer the hidden initial condition}
\KwOut{Trained model parameters $\theta$ and, if used, encoder parameters $\phi$}

Initialize parameters $\theta$ (and $\phi$, if used) and the corresponding optimizer states $s_\theta$ (and $s_\phi$) \;
\BlankLine
\For{each training step}{
    \eIf{full initial state is known}{
        $\hat{\mathbf{x}}_0 \leftarrow \mathbf{x}(t_0)$
    }{
        $\hat{\mathbf{x}}_{\mathrm{hid},0} \leftarrow \mathcal{NN}_\phi(\mathbf{x}_{\mathrm{obs}}(t_{0:K}))$ \\
        $\hat{\mathbf{x}}_0 \leftarrow \left(\mathbf{x}_{\mathrm{obs}}(t_{0}),\, \hat{\mathbf{x}}_{\mathrm{hid,0}}\right)$
    }
    \For{$i = 0,\dots,N-1$}{
        Separate $\hat{\mathbf{x}}_i \rightarrow (\hat{\mathbf{q}}_i, \hat{\mathbf{p}}_i)$ \;
        \eIf{translational symmetry is assumed}{
            $\tilde{\mathbf{q}}_i \leftarrow d(\hat{\mathbf{q}}_i)$
        }{
            $\tilde{\mathbf{q}}_i \leftarrow \hat{\mathbf{q}}_i$
        }
        \eIf{separable Hamiltonian with known kinetic energy $\mathcal{T}$ is assumed}{
            $\hat{\mathcal{H}}_\theta \leftarrow \mathcal{T}(\hat{\mathbf{p}}_i) + \mathcal{NN}_\theta(\tilde{\mathbf{q}}_i)$ \;
        }{
            $\hat{\mathcal{H}}_\theta \leftarrow \mathcal{NN}_\theta(\tilde{\mathbf{q}}_i, \hat{\mathbf{p}}_i)$ \;
        }
        $f_\theta \leftarrow S\nabla_{\hat{\mathbf{x}}_i} \hat{\mathcal{H}}_\theta$ \;
        $\hat{\mathbf{x}}_{i+1} \leftarrow \mathrm{ODESolve}(f_\theta, \hat{\mathbf{x}}_i, t_i, t_{i+1})$ \;
    }
    $\mathcal L \leftarrow \frac{1}{N+1}\sum_{i=0}^{N}
    \left\|\hat{\mathbf{x}}_{\mathrm{obs},i}-\mathbf{x}_{\mathrm{obs}}(t_i)\right\|_2^2$ \;
    $(\theta, s_\theta) \leftarrow \mathrm{OptimStep}(\theta, \nabla_\theta \mathcal L, s_\theta)$ \;
    \If{encoder is used}{
        $(\phi, s_\phi) \leftarrow \mathrm{OptimStep}(\phi, \nabla_\phi \mathcal L, s_\phi)$ \;
    }
}
\label{alg:training}
\end{algorithm}

\subsection{Hamiltonian dynamics and Hamiltonian neural networks}

Canonical Hamiltonian systems are described by Hamilton's equations
\begin{equation}
    \dot{\mathbf{q}} = \nabla_{\mathbf{p}} \mathcal{H}(\mathbf{q},\mathbf{p}),
    \qquad
    \dot{\mathbf{p}} = -\nabla_{\mathbf{q}} \mathcal{H}(\mathbf{q},\mathbf{p}),
    \label{eq:Hamiltons_equations}
\end{equation}
where $\mathbf{q}, \mathbf{p} \in \mathbb{R}^n$ are the generalized position and momentum coordinates, and the Hamiltonian $\mathcal{H} : \mathbb{R}^n \times \mathbb{R}^n \rightarrow \mathbb{R}$ represents the generalised total energy and is conserved over time. The system \eqref{eq:Hamiltons_equations} is often expressed in the condensed form \eqref{eq:hamilton}, where $\mathbf{x} = (\mathbf{q}, \mathbf{p})^{\mathsf{T}}$ and $\mathcal{H} : \mathbb{R}^{2n} \rightarrow \mathbb{R}$.

A Hamiltonian neural network is a network that learns the mapping from $\mathbf{q}$ and $\mathbf{p}$ to the Hamiltonian function, 
\begin{equation*}
    \mathcal{H}(\mathbf{q}, \mathbf{p}) \approx \hat{\mathcal{H}}_{\theta}(\mathbf{q}, \mathbf{p}) = \mathcal{NN}_\theta(\mathbf{q}, \mathbf{p}).
\end{equation*}

\noindent
From the learned $\mathcal{H}_{\theta}$, the equations of motion are obtained by assuming Hamilton's equations (\ref{eq:Hamiltons_equations}) and using automatic differentiation, i.e.\ backpropagation, to obtain $\nabla\hat{\mathcal{H}}_\theta(\mathbf{x})$, which guarantees conservation of the approximated total energy $\hat{\mathcal{H}}_\theta$ for the dynamical system implied by the model.

When training Hamiltonian neural networks \cite{Greydanus2019HamiltonianNetworks}, the loss is normally evaluated as
\begin{equation*}
\mathcal{L} = \frac{1}{N+1}\sum_{i=0}^{N} \left\lVert \dot{\mathbf{x}}(t_i) - S \nabla \hat{\mathcal{H}}_\theta(\mathbf{x}(t_i)) \right\rVert_2^2,
\end{equation*}
or if data on time derivatives are not available, on a numerical integration scheme,
\begin{equation}
    \mathcal{L} = \frac{1}{N}\sum_{i=0}^{N-1}  \left\lVert \mathbf{x} (t_{i+1}) - \mathrm{ODESolve}(S \nabla \hat{\mathcal{H}}_\theta, \mathbf{x}(t_i), t_i, t_{i+1}) \right\rVert_2^2,
    \label{eq:hnn_training}
\end{equation}
where ODESolve here can be any mono-implicit integrator that relies on data from two successive time points \cite{Celledoni2025LearningIntegrators}; e.g., the midpoint method
\begin{equation*}
\mathrm{ODESolve}(S \nabla \hat{\mathcal{H}}_\theta, \mathbf{x}(t_i), t_i, t_{i+1}) = \mathbf{x}(t_i) + S \nabla \hat{\mathcal{H}}_\theta \bigg(\frac{\mathbf{x}(t_i)+\mathbf{x}(t_{i+1})}{2}\bigg).
\end{equation*}
However, both these approaches require that training data on all variables in the system are available, which is often not the case in real-world scenarios. In our method, we circumvent this problem by employing a loss formulation and training approach similar to what is used in the neural ODE literature.

Here and in the following, the loss is written for a single trajectory, and does not include batch dimensions and spatial or state-component dimensions for notational simplicity. In the implementation, the mean squared error is computed over all corresponding observed scalar entries.

\subsection{Neural ODE training and learning the initial state}
\label{sec:neuralODE-training-and-learning-IC}
For general neural ODEs, the derivative of the state vectors, corresponding to right-hand side of the general first-order ODE $\dot{\mathbf{x}} = f(\mathbf{x})$, is given directly by the neural network output. That is,
\begin{equation*}
    \dot{\mathbf{x}}  = f(\mathbf{x}) \approx \hat{f}_\theta(\mathbf{x}) = \mathcal{NN}_\theta(\mathbf{x}).
\end{equation*}
%where $f$ denotes the true underlying dynamics and $\mathcal{NN}_\theta$ is a neural network with parameters $\theta$. 
A trajectory is then obtained by numerical integration:
\begin{equation}
\begin{aligned}
    \hat{\mathbf{x}}_i &= \mathrm{ODESolve} (\hat{f}_\theta, \mathbf{x}(t_0), t_0, t_i) \\
    &\approx \mathbf{x}(t_0) + \int_{t_0}^{t_i} \hat{f}_\theta(\hat{\mathbf{x}}(\tau))\,\mathrm{d} \tau \\
    &\approx \mathbf{x}(t_i).
\end{aligned}
    \label{eq:neuralODE_general}
\end{equation}
Here, the first approximation is due to the error in the discretization method used by ODESolve, while the second is due to the approximation error of $\hat{f}_\theta$ to $f$. Notice that the numerical integrator must be explicit if we are to avoid solving a non-linear system of equations with a root-finding algorithm at each training iteration. This is different from the training approach of HNNs mentioned above, where we can use mono-implicit methods, which makes it possible to use symplectic integrators during training, also for non-separable systems \cite{Celledoni2025LearningIntegrators, Eidnes2023Pseudo-HamiltonianForces}. This is because HNN training loss is done on the integration from one data point to the next, given by \eqref{eq:hnn_training}, while the neural ODE training loss relies on integration from the initial state to all future states of a given trajectory:
\begin{equation}
\begin{aligned}
    \mathcal{L}(\theta) &= \frac{1}{N+1}\sum_{i=0}^{N} \|\mathbf{x}(t_i) - \hat{\mathbf{x}}_i\|_2^2 \\
    &= \frac{1}{N+1}\sum_{i=1}^{N} \left\| \mathbf{x}(t_i) - \mathrm{ODESolve} (\hat{f}_\theta, \mathbf{x}(t_0), t_0, t_i)\right \|_2^2 \\
    &= \frac{1}{N+1}\sum_{i=1}^{N} \left\| \mathbf{x}(t_i) - \mathrm{ODESolve} (\hat{f}_\theta, \hat{\mathbf{x}}_{i-1}, t_{i-1}, t_i)\right \|_2^2.
    \label{eq:neuralODE_training}
\end{aligned}
\end{equation}
Note that the neural ODE framework can naturally handle unevenly or sparsely sampled training data, since the formulation imposes no restrictions on the sampling times. 

In our neural Hamiltonian ODE (NHODE) framework, we assume a canonical Hamiltonian system and indirectly learn the equations of motion through learning the Hamiltonian, by $f_\theta = S \nabla \mathcal{H}_\theta$, but we use a neural-ODE-type training loss \ref{eq:neuralODE_training}. This makes it possible to learn models of partially observed Hamiltonian systems, where the HNN approach will not be possible and the general neural ODE approach will struggle to learn accurate models. Imposing Hamiltonian dynamics restricts the solution space of the model to only learn dynamics that are consistent with the underlying physics, compensating for lacking observations. The NHODE approach lets us learn systems where some state variables are completely unobserved, by explicitly excluding them from the loss function. By splitting up the state $\mathbf{x} = \left(\mathbf{x}_{\mathrm{obs}}, \mathbf{x}_{\mathrm{hid}} \right)^{\mathsf{T}}$ in an observed part $\mathbf{x}_{\mathrm{obs}}$ and a latent part $\mathbf{x}_{\mathrm{hid}}$, we define the loss function 
\begin{equation*}
\begin{aligned}
    \mathcal{L}(\theta) 
    &= \frac{1}{N+1}\sum_{i=0}^N \|\mathbf{x}_{\mathrm{obs}}(t_i) - \hat{\mathbf{x}}_{\mathrm{obs},i}\|_2^2\\
    &= \frac{1}{N+1}\sum_{i=0}^N \left\| \mathbf{x}_{\mathrm{obs}}(t_i) - \left[ \mathrm{ODESolve}(S\nabla \hat{\mathcal{H}}_\theta,\left(\mathbf{x}_{\mathrm{obs}}(0), \mathbf{x}_{\mathrm{hid}}(0)\right)^{\mathsf{T}},t_0,t_{i})\right]_{\mathrm{obs}} \right \|_2^2,
\end{aligned}
\end{equation*}
where $[\cdot]_{\mathrm{obs}}$ describes the observed part of the state vector only. 

This training approach also allows for learning the initial conditions of the unobserved variables $\mathbf{x}_{\mathrm{hid}}(0)$. To that end, we introduce a separate neural network as an encoder to learn the hidden initial state $\mathbf{x}_{\mathrm{hid}} (0)$, given the first $K$ time steps of the observed variables:
\begin{equation*}
    \mathbf{x}_{\mathrm{hid}} (0) \approx \hat{\mathbf{x}}_{\phi,\mathrm{hid},0} = \mathcal{NN}_{\phi}
    \left(
    \mathbf{x}_{\mathrm{obs}}(t_0),\ldots,\mathbf{x}_{\mathrm{obs}}(t_K)
    \right).
\end{equation*}
The two networks $\hat{\mathcal{H}}_\theta$ and $\hat{\mathbf{x}}_{\phi,\mathrm{hid},0}$ are then trained jointly using the same shared loss function
\begin{equation}
    \mathcal{L}(\theta, \phi) 
    = \frac{1}{N+1}\sum_{i=0}^N \left\| \mathbf{x}_{\mathrm{obs}}(t_i) - \left[ \mathrm{ODESolve}(S\nabla \hat{\mathcal{H}}_\theta,\left(\mathbf{x}_{\mathrm{obs}}(0), \hat{\mathbf{x}}_{\phi,\mathrm{hid},0}\right)^{\mathsf{T}},t_0,t_{i})\right]_{\mathrm{obs}} \right \|_2^2.
    \label{eq:full_loss_function}
\end{equation}
Here, it is also possible to incorporate system-specific constraints in the encoder network to ensure that the predicted initial conditions are physically consistent, e.g. by enforcing positivity of the initial position. 

\subsection{Coordinate transformation and known kinetic energy} 
\label{sec:incorporating-physics}

While all NHODE models preserve the learned Hamiltonian, further physics-informed assumptions are imposed to obtain the models we call NHODE$_{\textrm{tot,rel}}$, NHODE$_{\textrm{pot,abs}}$ and NHODE$_{\textrm{pot,rel}}$; see Table \ref{tab:overview_phys_quantities_conserved_by_models}. 

For the NHODE$_{\textrm{tot,rel}}$ model, we apply a coordinate transformation in the systems where we assume translational symmetry. Translational symmetry here means that if the positions $\mathbf{q}_i$ of all point masses $i$ in the system are shifted by the same constant factor, the Hamiltonian remains unchanged. That is, we assume the Hamiltonian can be expressed as $\mathcal{H}(d(\mathbf{q}), \mathbf{p})$, where $d(\mathbf{q}) = (\| \mathbf{q}_j - \mathbf{q}_k \|)_{j<k}$. Thus, we set up a neural network $\hat{\mathcal{H}}_\theta(\mathbf{q}, \mathbf{p}) = \tilde{\mathcal{H}}_\theta(d(\mathbf{q}), \mathbf{p})$ that is given relative distances instead of absolute positions as input. Importantly, the gradients are still taken with respect to the original state coordinates, backpropagating the loss through the coordinate transformation. This ensures that the total energy is still conserved, and in addition, the translational symmetry implies conservation of linear momentum; see Appendix \ref{sec:proofs} for a proof.

If we assume that the Hamiltonian is separable into potential and kinetic energy, $\mathcal{H}(\mathbf{q}, \mathbf{p}) = \mathcal{T}(\mathbf{p}) + \mathcal{V}(\mathbf{q})$, and we further assume that the kinetic energy is in the standard form, we may learn an approximation of the Hamiltonian by only learning the potential energy depending on positions:
\begin{equation*}
    \hat{\mathcal{H}}_\theta(\mathbf{q}, \mathbf{p}) = \sum_{j=1}^{N_p} \frac{ \| \mathbf{p}_j \|^2}{2m_j} + \hat{\mathcal{V}}_\theta (\mathbf{q}),
    %\label{eq:hamiltonian_known_kinetic_energy}
\end{equation*}
which leads to what we call the NHODE$_{\textrm{pot,abs}}$ model. If in addition translational symmetry is assumed, we set $\hat{\mathcal{V}}_\theta (\mathbf{q}) = \tilde{\mathcal{V}}_\theta (d(\mathbf{q}))$, to obtain the NHODE$_{\textrm{pot,rel}}$ model. This model guarantees conservation of angular momentum, in addition to linear momentum and the learned Hamiltonian. A proof is provided in Appendix \ref{sec:proofs}.

\section{Code availability}

The code is available on the GitHub repository \url{https://github.com/sunnivameltzer/nhode}, and has also been archived on \cite{Meltzer2026Neuralequations}.

\section{Data availability}

Training data are generated by running the code with the seeds given in the configuration files for each experiment. For reproducibility, all the trained models used to generate the results in this paper are also made available on \cite{Meltzer2026Neuralequations}.

\section{Acknowledgments}

All authors acknowledge support from the Research Council of Norway, through the projects MMSIML (project no. 346003) and PhysML (project no. 338779).

\section{Author contributions}

S.M.\ contributed to the conceptualization, methodology, investigation, software, visualization and writing (original draft). S.E.\ and A.J.S.\ contributed to the conceptualization, methodology, writing (original draft) and supervision. All authors reviewed and edited the manuscript.

\section{Competing interests}

The authors declare no competing interests.

\appendix

\section{Conservation properties of the learned system}\label{sec:proofs}

\begin{proposition}[Conservation of the learned Hamiltonian]
Any differentiable function $\hat{\mathcal{H}}_\theta : \mathbb{R}^{2n} \to \mathbb{R}$ is conserved along solutions of the Hamiltonian system \eqref{eq:hamilton} with $\mathcal{H} = \hat{\mathcal{H}}_\theta$.
\end{proposition}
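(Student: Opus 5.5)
The plan is to differentiate $\hat{\mathcal{H}}_\theta$ along an arbitrary solution $\mathbf{x}(t)$ of $\dot{\mathbf{x}} = S\nabla\hat{\mathcal{H}}_\theta(\mathbf{x})$ and show that the derivative vanishes identically. The whole argument reduces to the skew-symmetry of the canonical matrix $S$.

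First, I fix a solution $\mathbf{x}:I\to\mathbb{R}^{2n}$ on some interval $I$. Because $\hat{\mathcal{H}}_\theta$ is differentiable and $\mathbf{x}$ is differentiable (it solves the ODE), the chain rule gives
\begin{equation*}
\frac{\mathrm{d}}{\mathrm{d}t}\hat{\mathcal{H}}_\theta(\mathbf{x}(t)) = \nabla\hat{\mathcal{H}}_\theta(\mathbf{x}(t))^{\mathsf{T}}\dot{\mathbf{x}}(t) = \nabla\hat{\mathcal{H}}_\theta(\mathbf{x}(t))^{\mathsf{T}} S\,\nabla\hat{\mathcal{H}}_\theta(\mathbf{x}(t)).
\end{equation*}

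Second, I use that $S^{\mathsf{T}} = -S$, which is immediate from its block form $\left(\begin{smallmatrix} 0 & I \\ -I & 0 \end{smallmatrix}\right)$. For any skew-symmetric $S$ and any vector $\mathbf{v}$, the scalar $\mathbf{v}^{\mathsf{T}}S\mathbf{v}$ equals its own transpose $\mathbf{v}^{\mathsf{T}}S^{\mathsf{T}}\mathbf{v} = -\mathbf{v}^{\mathsf{T}}S\mathbf{v}$, so it is zero. Equivalently, in the $(\mathbf{q},\mathbf{p})$ form \eqref{eq:Hamiltons_equations}, the two terms $\nabla_{\mathbf{q}}\hat{\mathcal{H}}_\theta\cdot\nabla_{\mathbf{p}}\hat{\mathcal{H}}_\theta$ and $-\nabla_{\mathbf{p}}\hat{\mathcal{H}}_\theta\cdot\nabla_{\mathbf{q}}\hat{\mathcal{H}}_\theta$ cancel.

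Third, a function with identically zero derivative on an interval is constant, by the mean value theorem. Hence $\hat{\mathcal{H}}_\theta(\mathbf{x}(t)) = \hat{\mathcal{H}}_\theta(\mathbf{x}(t_0))$ for all $t\in I$.

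There is no real obstacle here. The only point to check is that the statement concerns the exact flow of \eqref{eq:hamilton}, not the numerical ODESolve iterates, since the Runge--Kutta discretization does not preserve $\hat{\mathcal{H}}_\theta$ exactly. I would also note that when $\hat{\mathcal{H}}_\theta$ is composed with the coordinate map $d(\mathbf{q})$, the gradient is still taken in the original coordinates, so the argument applies unchanged to all four model variants.
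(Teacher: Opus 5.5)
Your proof is correct and follows the paper's argument: the chain rule gives $\frac{d}{dt}\hat{\mathcal{H}}_\theta(\mathbf{x}(t)) = \nabla\hat{\mathcal{H}}_\theta^{\mathsf{T}} S\, \nabla\hat{\mathcal{H}}_\theta$, and this vanishes because $S$ is skew-symmetric. Your explicit mean-value-theorem step, which turns the zero derivative into constancy, is a harmless detail that the paper leaves implicit.
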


\begin{proof}
Along a solution $\mathbf{x}(t)$ of \eqref{eq:hamilton}, we have
\begin{equation*}
    \frac{d}{dt}\hat{\mathcal{H}}_\theta(\mathbf{x}(t))
    =
    \nabla \hat{\mathcal{H}}_\theta(\mathbf{x}(t))^\mathsf{T}\dot{\mathbf{x}}(t)
    =
    \nabla \hat{\mathcal{H}}_\theta(\mathbf{x}(t))^\mathsf{T}
    S
    \nabla \hat{\mathcal{H}}_\theta(\mathbf{x}(t))
    =
    0.
\end{equation*}
The last equality follows from $S$ being skew-symmetric, meaning that $\mathbf{v}^\mathsf{T}S\mathbf{v}=0$ for any $\mathbf{v} \in \mathbb{R}^{2n}$.
\end{proof}

\begin{proposition}[Conservation of linear momentum]
Let $\mathbf{q} = (\mathbf{q}_1, \dots, \mathbf{q}_{N_p})$, where $\mathbf{q}_j \in \mathbb{R}^d$, and define the pairwise distance map
\begin{equation*}
    d(\mathbf{q}) = (\| \mathbf{q}_j - \mathbf{q}_k \|)_{j<k}.
\end{equation*}
If the learned Hamiltonian is a function of pairwise distances,
\begin{equation*}
    \hat{\mathcal{H}}_\theta(\mathbf{q}, \mathbf{p}) = \tilde{\mathcal{H}}_\theta(d(\mathbf{q}), \mathbf{p}),
\end{equation*}
then the total linear momentum
\begin{equation*}
    \mathbf{P} = \sum_{j=1}^{N_p} \mathbf{p}_j
\end{equation*}
is conserved along solutions of the Hamiltonian system \eqref{eq:Hamiltons_equations} with $\mathcal{H} = \hat{\mathcal{H}}_\theta$.
\end{proposition}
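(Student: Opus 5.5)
The plan is to compute $\dot{\mathbf{P}}$ directly from Hamilton's equations \eqref{eq:Hamiltons_equations} and show that it vanishes because $\hat{\mathcal{H}}_\theta$ is invariant under simultaneous translations of all particle positions. Since $\dot{\mathbf{p}}_j = -\nabla_{\mathbf{q}_j}\hat{\mathcal{H}}_\theta(\mathbf{q},\mathbf{p})$, we have
\begin{equation*}
    \dot{\mathbf{P}} = -\sum_{j=1}^{N_p} \nabla_{\mathbf{q}_j}\hat{\mathcal{H}}_\theta(\mathbf{q},\mathbf{p}),
\end{equation*}
so it suffices to show that this sum of gradients is identically zero.

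\textbf{Step 1: translation invariance.} For any $\mathbf{c}\in\mathbb{R}^d$, let $\mathbf{q}+\mathbf{c}$ denote $(\mathbf{q}_1+\mathbf{c},\dots,\mathbf{q}_{N_p}+\mathbf{c})$. Then $(\mathbf{q}_j+\mathbf{c})-(\mathbf{q}_k+\mathbf{c}) = \mathbf{q}_j-\mathbf{q}_k$, so $d(\mathbf{q}+\mathbf{c}) = d(\mathbf{q})$. Consequently $\hat{\mathcal{H}}_\theta(\mathbf{q}+\mathbf{c},\mathbf{p}) = \hat{\mathcal{H}}_\theta(\mathbf{q},\mathbf{p})$ for all $\mathbf{c}$.

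\textbf{Step 2: differentiate in $\mathbf{c}$.} Differentiating this identity with respect to $\mathbf{c}$ at $\mathbf{c}=0$ and applying the chain rule gives $\sum_j \nabla_{\mathbf{q}_j}\hat{\mathcal{H}}_\theta = 0$. Hence $\dot{\mathbf{P}}=0$.

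As an alternative or sanity check, the same conclusion can be reached by direct computation. The chain rule through $d$ gives
\begin{equation*}
    \nabla_{\mathbf{q}_j}\hat{\mathcal{H}}_\theta = \sum_{k\neq j} \partial_{jk}\tilde{\mathcal{H}}_\theta \,\frac{\mathbf{q}_j-\mathbf{q}_k}{\|\mathbf{q}_j-\mathbf{q}_k\|},
\end{equation*}
where $\partial_{jk}\tilde{\mathcal{H}}_\theta$ is the partial derivative with respect to the $(j,k)$ distance entry, indexed symmetrically. Summing over $j$, the contributions of each pair $(j,k)$ appear twice with opposite signs and cancel, which is Newton's third law for the learned forces.

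\textbf{Main obstacle: differentiability.} The map $d$ is not differentiable where $\mathbf{q}_j=\mathbf{q}_k$, so Step 2 requires $\hat{\mathcal{H}}_\theta$ to be differentiable at the points in question. I would handle this by restricting to the open set where all particles are distinct, on which $d$ is smooth, and assuming $\tilde{\mathcal{H}}_\theta$ is differentiable. This assumption is implicit in the vector field being defined along solutions. On that set the invariance identity holds for all $\mathbf{c}$ in a neighbourhood of $0$, so the derivative argument is valid along any solution that stays in it.
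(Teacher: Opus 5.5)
Your proposal is correct and follows the paper's proof essentially verbatim. Like the paper, you use translation invariance of $d$ and differentiate $\hat{\mathcal{H}}_\theta(\mathbf{q}+\mathbf{c},\mathbf{p})=\hat{\mathcal{H}}_\theta(\mathbf{q},\mathbf{p})$ at $\mathbf{c}=\mathbf{0}$ to get $\sum_j \partial\hat{\mathcal{H}}_\theta/\partial\mathbf{q}_j=\mathbf{0}$, then apply Hamilton's equations; your pairwise-cancellation check and differentiability caveat, which the paper leaves implicit, are sound but not needed for the main argument.
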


\begin{proof}
For any global translation $\mathbf{a} \in \mathbb{R}^d$,
\begin{equation*}
    (\mathbf{q}_j+\mathbf{a})-(\mathbf{q}_k+\mathbf{a}) = \mathbf{q}_j-\mathbf{q}_k,
\end{equation*}
and hence $d(\mathbf{q}+\mathbf{a})=d(\mathbf{q})$. Therefore
\begin{equation*}
    \hat{\mathcal{H}}_\theta(\mathbf{q}+\mathbf{a},\mathbf{p})
    = \tilde{\mathcal{H}}_\theta(d(\mathbf{q}+\mathbf{a}),\mathbf{p})
    = \tilde{\mathcal{H}}_\theta(d(\mathbf{q}),\mathbf{p})
    = \hat{\mathcal{H}}_\theta(\mathbf{q},\mathbf{p}),
\end{equation*}
so $\hat{\mathcal{H}}_\theta$ is translation invariant. Differentiating this identity with respect to $\mathbf{a}$ at $\mathbf{a}=\mathbf{0}$ gives
\begin{equation*}
    \mathbf{0}
    =
    \left.\nabla_{\mathbf{a}}\hat{\mathcal{H}}_\theta(\mathbf{q}_1+\mathbf{a},\dots,\mathbf{q}_{N_p}+\mathbf{a},\mathbf{p})\right|_{\mathbf{a}=\mathbf{0}}
    =
    \sum_{j=1}^{N_p}\frac{\partial \hat{\mathcal{H}}_\theta}{\partial \mathbf{q}_j}.
\end{equation*}
Using Hamilton's equations,
\begin{equation*}
    \frac{d\mathbf{P}}{dt}
    =
    \sum_{j=1}^{N_p} \frac{d\mathbf{p}_j}{dt}
    =
    -\sum_{j=1}^{N_p}\frac{\partial \hat{\mathcal{H}}_\theta}{\partial \mathbf{q}_j}
    =
    \mathbf{0}.
\end{equation*}
Hence, $\mathbf{P}$ is conserved.
\end{proof}

\begin{proposition}[Conservation of angular momentum]
Let $\mathbf{q} = (\mathbf{q}_1, \dots, \mathbf{q}_{N_p})$ and $\mathbf{p} = (\mathbf{p}_1, \dots, \mathbf{p}_{N_p})$, where $\mathbf{q}_j,\mathbf{p}_j \in \mathbb{R}^d$ with $d=2$ or $d=3$. Define
\begin{equation*}
    r_{jk} := \|\mathbf{q}_j - \mathbf{q}_k \|, \qquad d(\mathbf{q}) := (r_{jk})_{j<k}.
\end{equation*}
Assume the learned Hamiltonian is separable and of the form
\begin{equation}
    \hat{\mathcal{H}}_\theta(\mathbf{q}, \mathbf{p})
    =
    \sum_{j=1}^{N_p}\frac{\|\mathbf{p}_j\|^2}{2m_j}
    +
    \tilde{\mathcal{V}}_\theta(d(\mathbf{q})).
    \label{eq:hamiltonian_for_conservation_angular_momentum}
\end{equation}
Then the total angular momentum
\begin{equation*}
    \mathbf{L} = \sum_{j=1}^{N_p} \mathbf{q}_j \times \mathbf{p}_j
\end{equation*}
is conserved along solutions of \eqref{eq:Hamiltons_equations} with $\mathcal{H} = \hat{\mathcal{H}}_\theta$. For $d=2$, the cross product is understood by embedding planar vectors in $\mathbb{R}^3$, so that $\mathbf{L}$ has only an out-of-plane component.
\end{proposition}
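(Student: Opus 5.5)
I would verify the conservation law directly: differentiate $\mathbf{L}$ along solutions of \eqref{eq:Hamiltons_equations} with $\mathcal{H}=\hat{\mathcal{H}}_\theta$ as in \eqref{eq:hamiltonian_for_conservation_angular_momentum}, and show that the kinetic and potential contributions vanish separately. By the product rule,
\begin{equation*}
    \frac{d\mathbf{L}}{dt} = \sum_{j=1}^{N_p} \dot{\mathbf{q}}_j \times \mathbf{p}_j + \sum_{j=1}^{N_p} \mathbf{q}_j \times \dot{\mathbf{p}}_j .
\end{equation*}
Because the kinetic energy has the standard form, Hamilton's equations give $\dot{\mathbf{q}}_j = \nabla_{\mathbf{p}_j}\hat{\mathcal{H}}_\theta = \mathbf{p}_j/m_j$. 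Each term of the first sum is therefore $\mathbf{p}_j\times\mathbf{p}_j/m_j=\mathbf{0}$. This is exactly where separability and the quadratic kinetic term are needed, and it explains why NHODE$_{\textrm{tot,rel}}$ does not get this guarantee in Table \ref{tab:overview_phys_quantities_conserved_by_models}.

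For the second sum I would use $\dot{\mathbf{p}}_j = -\nabla_{\mathbf{q}_j}\tilde{\mathcal{V}}_\theta(d(\mathbf{q}))$ and expand by the chain rule through the distance map. Writing $\partial_{jk}\tilde{\mathcal{V}}_\theta$ for the partial derivative with respect to $r_{jk}$, and using $\nabla_{\mathbf{q}_j} r_{jk} = (\mathbf{q}_j-\mathbf{q}_k)/r_{jk}$ and $\nabla_{\mathbf{q}_k} r_{jk} = -(\mathbf{q}_j-\mathbf{q}_k)/r_{jk}$, each pair $j<k$ contributes a force $-\partial_{jk}\tilde{\mathcal{V}}_\theta\,(\mathbf{q}_j-\mathbf{q}_k)/r_{jk}$ to particle $j$ and the opposite force to particle $k$. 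The torques of one pair then combine as
\begin{equation*}
    -\frac{\partial_{jk}\tilde{\mathcal{V}}_\theta}{r_{jk}}\bigl(\mathbf{q}_j - \mathbf{q}_k\bigr)\times\bigl(\mathbf{q}_j-\mathbf{q}_k\bigr) = \mathbf{0}.
\end{equation*}
Summing over pairs gives $d\mathbf{L}/dt=\mathbf{0}$. For $d=2$, I would embed the vectors in $\mathbb{R}^3$ with zero third component, so the same computation applies verbatim and only the out-of-plane component is ever nonzero.

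The only delicate point is differentiability of $r_{jk}$ at coincident particles $\mathbf{q}_j=\mathbf{q}_k$, where the Euclidean norm is not differentiable. I would state that the argument holds on the open set where all pairwise distances are nonzero, which is where the learned vector field is defined through the chain rule anyway.

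As an alternative, one can argue via rotation invariance. One shows $\hat{\mathcal{H}}_\theta(R\mathbf{q},R\mathbf{p})=\hat{\mathcal{H}}_\theta(\mathbf{q},\mathbf{p})$ for $R\in SO(d)$, differentiates along $R=\exp(t\Omega)$ with $\Omega$ skew-symmetric, and reads off $\sum_j(\mathbf{q}_j\times\nabla_{\mathbf{q}_j}\hat{\mathcal{H}}_\theta+\mathbf{p}_j\times\nabla_{\mathbf{p}_j}\hat{\mathcal{H}}_\theta)=\mathbf{0}$. This mirrors the linear momentum proof, but the direct pairwise computation is shorter and more transparent, so I would present that one.
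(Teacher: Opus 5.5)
Your proof is correct and follows the paper's argument essentially step by step: the kinetic contribution vanishes because $\dot{\mathbf{q}}_j = \mathbf{p}_j/m_j$ is parallel to $\mathbf{p}_j$, and the potential contribution cancels pair by pair because the chain rule through $d(\mathbf{q})$ gives equal and opposite forces along $\mathbf{q}_j - \mathbf{q}_k$. Your restriction to the open set where all $r_{jk} \neq 0$ is a sensible addition that the paper leaves implicit.
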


\begin{proof}
We give the proof using the three-dimensional cross product; the planar case follows by embedding $\mathbb{R}^2$ in $\mathbb{R}^3$. Differentiating the total angular momentum gives
\begin{equation}
\begin{aligned}
    \frac{d\mathbf{L}}{dt}
    &=
    \sum_{j=1}^{N_p}\left(\dot{\mathbf{q}}_j \times \mathbf{p}_j + \mathbf{q}_j \times \dot{\mathbf{p}}_j\right) \\
    &=
    \sum_{j=1}^{N_p}\left(
        \frac{\partial \hat{\mathcal{H}}_\theta}{\partial \mathbf{p}_j} \times \mathbf{p}_j
        -
        \mathbf{q}_j \times \frac{\partial \hat{\mathcal{H}}_\theta}{\partial \mathbf{q}_j}
    \right),
\end{aligned}
\label{eq:time_derivative_angular_momentum}
\end{equation}
where we used Hamilton's equations. The kinetic-energy term vanishes since
\begin{equation*}
    \frac{\partial \hat{\mathcal{H}}_\theta}{\partial \mathbf{p}_j} \times \mathbf{p}_j
    =
    \frac{\mathbf{p}_j}{m_j} \times \mathbf{p}_j
    =
    \mathbf{0}.
\end{equation*}
Since the potential depends on $\mathbf{q}$ only through pairwise distances, the chain rule gives
\begin{equation*}
    \frac{\partial \hat{\mathcal{H}}_\theta}{\partial \mathbf{q}_j}
    =
    \frac{\partial \tilde{\mathcal{V}}_\theta}{\partial \mathbf{q}_j}
    =
    \sum_{k\neq j} f_{jk}(\mathbf{q}_j-\mathbf{q}_k),
    \qquad
    f_{jk}:=\frac{1}{r_{jk}}\frac{\partial \tilde{\mathcal{V}}_\theta}{\partial r_{jk}}.
\end{equation*}
Here, $\partial \tilde{\mathcal{V}}_\theta/\partial r_{jk}$ denotes the derivative with respect to the pairwise-distance component associated with the unordered pair $\{j,k\}$, so $f_{jk}=f_{kj}$. Substituting into \eqref{eq:time_derivative_angular_momentum} yields
\begin{equation*}
    \frac{d\mathbf{L}}{dt}
    =
    -\sum_{j=1}^{N_p}\sum_{k\neq j}\mathbf{q}_j \times f_{jk}(\mathbf{q}_j-\mathbf{q}_k).
\end{equation*}
The terms cancel pairwise. For each unordered pair $\{j,k\}$,
\begin{equation*}
\begin{aligned}
    &\mathbf{q}_j \times f_{jk}(\mathbf{q}_j-\mathbf{q}_k)
    +
    \mathbf{q}_k \times f_{kj}(\mathbf{q}_k-\mathbf{q}_j) \\
    &\qquad =
    f_{jk}\left[
        \mathbf{q}_j \times (\mathbf{q}_j-\mathbf{q}_k)
        +
        \mathbf{q}_k \times (\mathbf{q}_k-\mathbf{q}_j)
    \right] \\
    &\qquad =
    f_{jk}\left[
        \mathbf{q}_j \times \mathbf{q}_j
        -
        \mathbf{q}_j \times \mathbf{q}_k
        +
        \mathbf{q}_k \times \mathbf{q}_k
        -
        \mathbf{q}_k \times \mathbf{q}_j
    \right]
    =
    \mathbf{0}.
\end{aligned}
\end{equation*}
Thus all pairwise contributions vanish, and therefore $d\mathbf{L}/dt=\mathbf{0}$. Hence, $\mathbf{L}$ is conserved.
\end{proof}

\section{Testing optimal Plummer softening parameter}
\label{sec:plummer-parameter-test}

Figure \ref{fig:epsilon_range_test} shows the mean squared test error for ten models trained with different values of the Plummer softening parameter $\varepsilon$. The purpose of this test was to find a compromise between increased numerical stability and minimal modification of the original, unsoftened Hamiltonian. Based on these results, a value of $\varepsilon = 0.6$ was chosen for the three-body problem experiments.

\begin{figure}[tbh]
    \centering
    \includegraphics[width=0.8\linewidth]{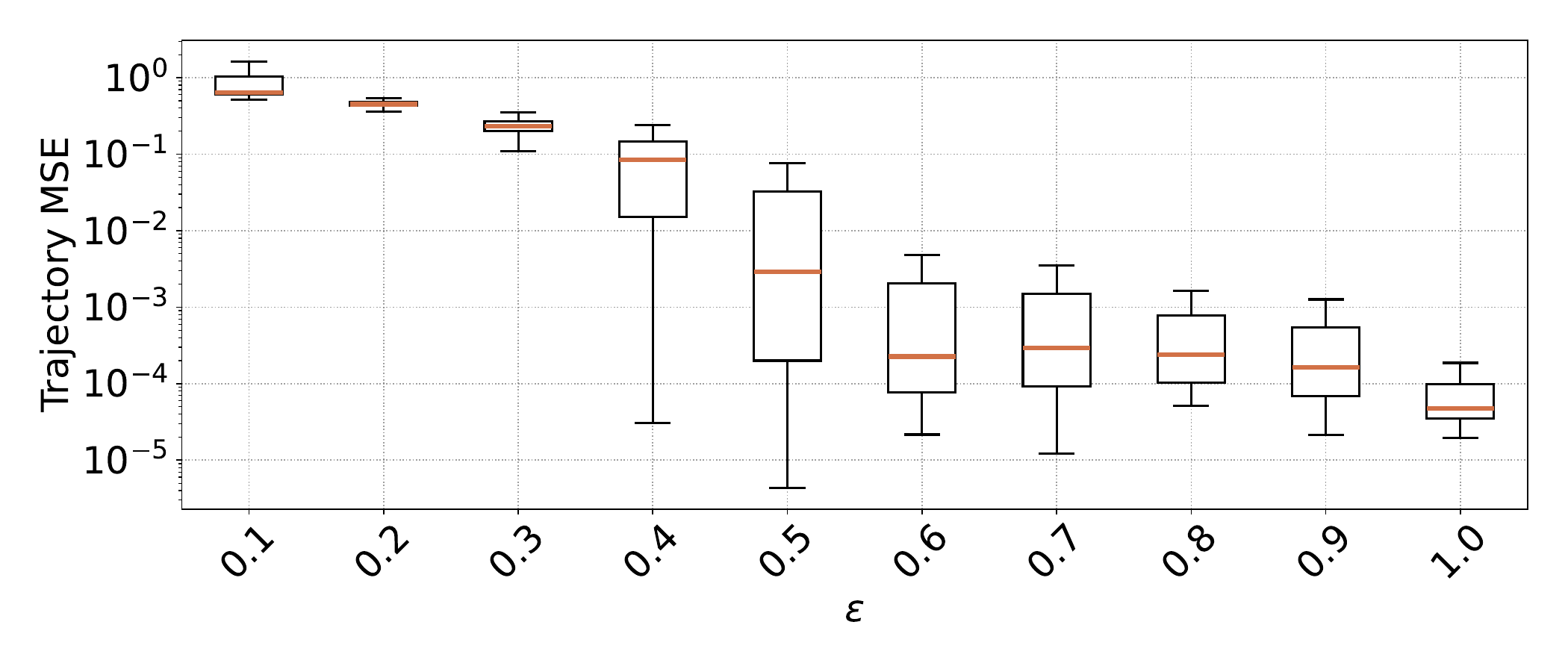}
    \caption{Testing the optimal value of the Plummer softening parameter in the three body problem. Ten different models were trained using different values of $\varepsilon$, and then tested on 128 trajectories from random initial conditions, where each test trajectory was rolled out 10000 steps using $\delta t=0.002$. The box plot shows the distribution of trajectory-wise test errors for each value of $\varepsilon$. Based on this, we chose $\varepsilon = 0.6$ for the three-body experiments.}
    \label{fig:epsilon_range_test}
\end{figure}

\section{Long-horizon rollouts three body problem}
\label{sec:super-long-rollouts-three-body-problem}

Figure \ref{fig:all-methods-superlong-rollout} shows predicted trajectories of the hidden point mass in the three body problem for a randomly selected test initial condition, rolled out for 20,000 time steps with $\Delta t = 0.01$. Although we don't expect pointwise agreement between the predicted trajectories and the ground truth over such long time horizons, these rollouts provide a qualitative comparison of model stability in the chaotic regime. The NHODE$_{\textrm{pot,rel}}$ model remains stable and reproduces the qualitative structure of the true dynamics, whereas the trajectories predicted by the other models become unstable and diverge.

\begin{figure}[tbh]
    \centering
    \includegraphics[width=\linewidth]{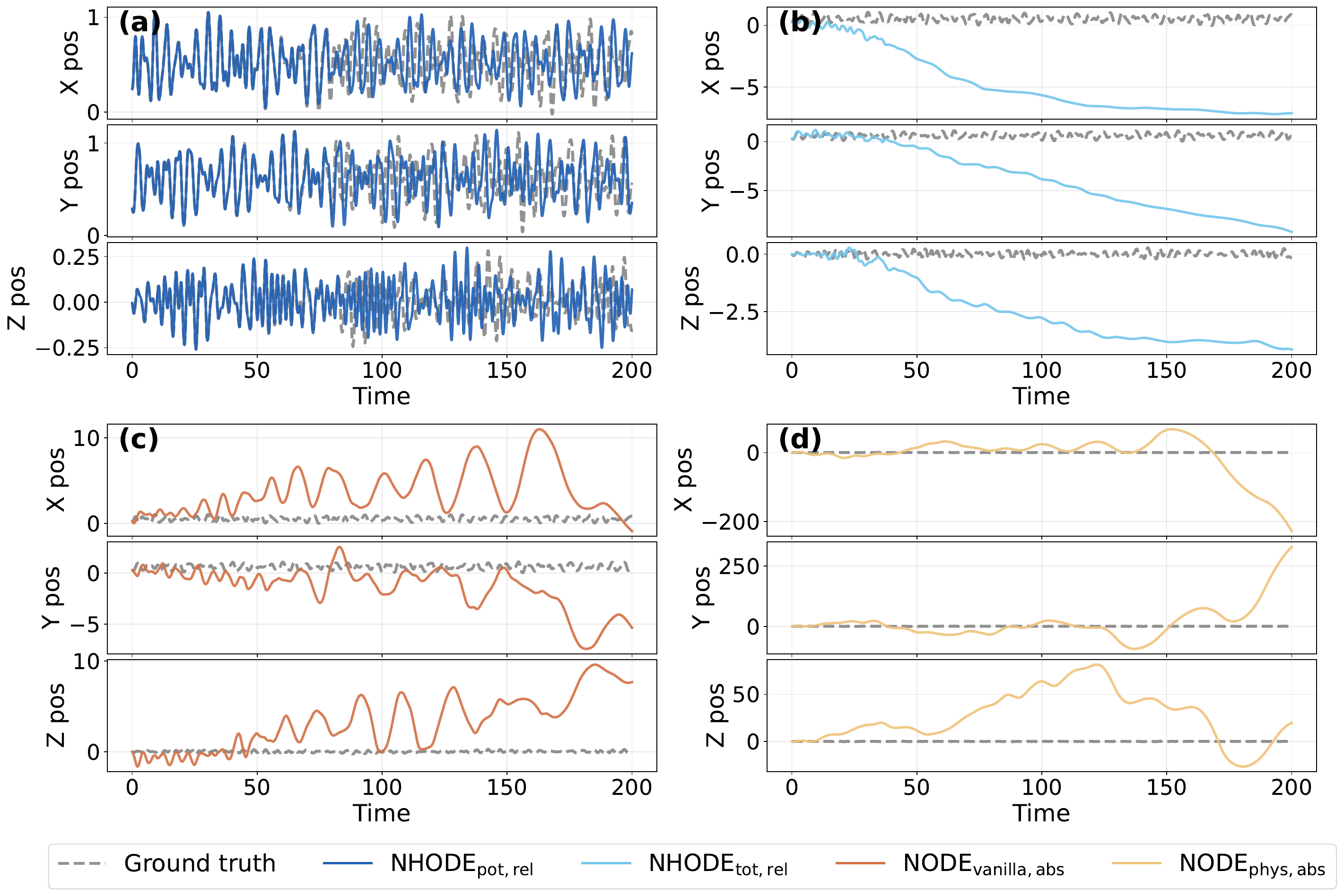}
    \caption{Long-horizon rollouts for the three-body problem, showing the predicted position of the unobserved point mass $m_1$. For each method, we show the predictions from one representative model rather than averaged trajectories across independently trained models, as trajectory averaging is not meaningful in the chaotic regime. Predicted and true trajectories are compared coordinate-wise in Cartesian coordinates $(x,y,z)$. Panels (a--d) correspond to the four models NHODE$_{\textrm{pot,rel}}$, NHODE$_{\textrm{tot,rel}}$, NODE$_{\textrm{vanilla,abs}}$, and NODE$_{\textrm{phys,abs}}$, respectively.
    }
    \label{fig:all-methods-superlong-rollout}
\end{figure}

\section{Experimental Details}
\label{sec:experimental-details}

All model types are implemented as $\tanh$-activated MLPs with a hidden dimension of 128 and a linear output layer, implemented in JAX~\cite{Bradbury2018JAX:Programs} using the Equinox~\cite{Kidger2021Equinox:Transformations} library. The model depths are given in table \ref{tab:hyperparams}.

The models are trained to minimise the MSE on the observed state dimensions only (equation \ref{eq:full_loss_function}). Training data are generated by integrating the ground-truth dynamics from randomly sampled initial conditions, with velocities converted to canonical momenta $p = mv$. 
%Data are split 85\%/15\% (train/validation) by random permutation. 
85\% of the training data are used for training, selected by random permutation.
All models are trained with the Adam optimiser~\cite{Kingma2017Adam:Optimization} using an initial learning rate of $\eta = 10^{-3}$, mini-batch size 64, and an exponential learning rate decay schedule with decay rate 0.95 applied every 10 epochs, for a total of 1200 epochs. Ten independent random seeds are used per model and system.

During training, all trajectories are integrated with the Tsit5 solver~\cite{Tsitouras2011RungeKuttaAssumption} from \texttt{diffrax}~\cite{Kidger2021OnEquations}, using an adaptive PID step-size controller with $\delta t_0$ chosen automatically.

The trained models are evaluated on 10 unseen initial conditions sampled randomly from the same domain as the training data, and rolled out over $t \in [0, 20]$ using 2001 time steps ($\delta t = 0.01$).

\label{app:systems}

\begin{table}[h]
\centering
\caption{System-specific parameters. Training data is identical across all model types within a system. $^\dagger$The three-body problem uses looser solver tolerances and a maximum step limit to accommodate chaotic dynamics.}
\label{tab:hyperparams}
\begin{tabular}{lcccc}
\toprule
 & \textbf{DMS} & \textbf{NLMS} & \textbf{3BP} & \textbf{DMS-IC} \\
\midrule
State dimension             & 4   & 12  & 18  & 4   \\
Model depth                 & 4   & 3   & 4   & 4   \\
Number of training samples  & 4000 & 4000 & 2000 & 4000 \\
Trajectory length           & 1.0 & 1.0 & 3.0 & 1.0 \\
Steps per trajectory        & 101 & 101 & 1001 & 101 \\
rtol / atol                 & $10^{-6}$ / $10^{-8}$ & $10^{-6}$ / $10^{-8}$ & $10^{-5}$ / $10^{-6}$$^\dagger$ & $10^{-6}$ / $10^{-8}$ \\
Max solver steps            & ---  & ---  & 16\,384$^\dagger$ & ---  \\
\bottomrule
\end{tabular}
\end{table}

\printbibliography

@inproceedings{Chu1991ASystems,
    author={Chu, S. Reynold and Shoureshi, Rahmat},
    booktitle={1991 American Control Conference}, 
    title={A Neural Network Approach for Identification of Continuous-Time Nonlinear Dynamic Systems}, 
    year={1991},
    volume={},
    number={},
    pages={1-5},
    doi={10.23919/ACC.1991.4791308}
}

@article{E2017ASystems,
    title = {A Proposal on Machine Learning via Dynamical Systems},
    year = {2017},
    journal = {Communications in Mathematics and Statistics},
    author = {E, Weinan},
    number = {1},
    month = {3},
    pages = {1--11},
    volume = {5},
    publisher = {Springer Verlag},
    doi = {10.1007/s40304-017-0103-z},
    issn = {2194671X}
}

@article{Kingma2017Adam:Optimization,
    author = {Kingma, Diederik and Ba, Jimmy},
    year = {2014},
    month = {12},
    pages = {},
    title = {Adam: A Method for Stochastic Optimization},
    journal = {International Conference on Learning Representations}
}

@article{Lagaris1998ArtificialEquations,
    title = {Artificial neural networks for solving ordinary and partial differential equations},
    year = {1998},
    journal = {IEEE Transactions on Neural Networks},
    publisher={Institute of Electrical and Electronics Engineers (IEEE)},
    author = {Lagaris, Isaac and Likas, Aristidis and Fotiadis, Dimitrios},
    number = {5},
    pages = {987--1000},
    volume = {9},
    doi = {10.1109/72.712178},
    issn = {10459227}
}

@misc{Dandekar2022BayesianEquations,
    title = {Bayesian Neural Ordinary Differential Equations},
    year = {2022},
    author = {Dandekar, Raj and Chung, Karen and Dixit, Vaibhav and Tarek, Mohamed and Garcia-Valadez, Aslan and Vemula, Krishna Vishal and Rackauckas, Chris},
    month = {2},
    eprint={2012.07244}, 
    archivePrefix={arXiv},
    %url={https://arxiv.org/abs/2012.07244}
}

@misc{Zhu2020DeepIntegrators,
    title = {Deep Hamiltonian networks based on symplectic integrators},
    year = {2020},
    author = {Zhu, Aiqing and Jin, Pengzhan and Tang, Yifa},
    month = {4},
    eprint={2004.13830},
    archivePrefix={arXiv},
    primaryClass={math.NA},
    %url={https://arxiv.org/abs/2004.13830},
}

@misc{Rackauckas2019DiffEqFlux.jlEquations,
    title = {DiffEqFlux.jl - A Julia Library for Neural Differential Equations},
    year = {2019},
    author = {Rackauckas, Chris and Innes, Mike and Ma, Yingbo and Bettencourt, Jesse and White, Lyndon and Dixit, Vaibhav},
    month = {2},
    eprint={1902.02376},
    archivePrefix={arXiv},
    primaryClass={cs.LG},
    %url={https://arxiv.org/abs/1902.02376}
}

@inproceedings{Cranmer2020DiscoveringBiases,
     author = {Cranmer, Miles and Sanchez Gonzalez, Alvaro and Battaglia, Peter and Xu, Rui and Cranmer, Kyle and Spergel, David and Ho, Shirley},
    booktitle = {Advances in Neural Information Processing Systems},
    editor = {H. Larochelle and M. Ranzato and R. Hadsell and M.F. Balcan and H. Lin},
    pages = {17429--17442},
    publisher = {Curran Associates, Inc.},
    title = {Discovering Symbolic Models from Deep Learning with Inductive Biases},
    url = {https://proceedings.neurips.cc/paper_files/paper/2020/file/c9f2f917078bd2db12f23c3b413d9cba-Paper.pdf},
    volume = {33},
    year = {2020}
}

@article{Rico-Martinez1992Discrete-Data,
    title = {Discrete- Vs. Continuous-Time Nonlinear Signal Processing of Cu Electrodissolution Data},
    year = {1992},
    journal = {Chemical Engineering Communications},
    publisher = {Taylor \& Francis},
    author = {Rico-Mart{\'{i}}nez, R. and Krischer, K. and Kevrekidis, I. G. and Kube, M. C. and Hudson, J. L.},
    number = {1},
    pages = {25--48},
    volume = {118},
    doi = {10.1080/00986449208936084},
    issn = {15635201}
}

@article{Kidger2021Equinox:Transformations,
    author = {Kidger, Patrick and Garcia, Cristian},
    title={{E}quinox: neural networks in {JAX} via callable {P}y{T}rees and filtered transformations},
    year={2021},
    journal={Differentiable Programming workshop at Neural Information Processing Systems 2021},
    eprint={2111.00254},
    archivePrefix={arXiv},
}

@misc{Matsubara2023FINDE:Quantities,
    title = {FINDE: Neural Differential Equations for Finding and Preserving Invariant Quantities},
    year = {2023},
    author = {Matsubara, Takashi and Yaguchi, Takaharu},
    month = {3},
    %url = {http://arxiv.org/abs/2210.00272},
    eprint={2210.00272},
    archivePrefix={arXiv},
}

@inproceedings{Toth2020HamiltonianNetworks,
    title = {Hamiltonian Generative Networks},
    booktitle = {International Conference on Learning Representations},
    year = {2020},
    author = {Toth, Peter and Rezende, Danilo Jimenez and Jaegle, Andrew and Racani{\`{e}}re, Sébastien and Botev, Aleksandar and Higgins, Irina},
    month = {2},
    eprint={1909.13789},
    archivePrefix={arXiv},
}

@misc{Sanchez-Gonzalez2019HamiltonianIntegrators,
    title = {Hamiltonian Graph Networks with ODE Integrators},
    year = {2019},
    author = {Sanchez-Gonzalez, Alvaro and Bapst, Victor and Cranmer, Kyle and Battaglia, Peter},
    month = {9},
    eprint={1909.12790},
    archivePrefix={arXiv},
    %url={https://arxiv.org/abs/1909.12790}, 
}

@inproceedings{Greydanus2019HamiltonianNetworks,
    author = {Greydanus, Sam and Dzamba, Misko and Yosinski, Jason},
    booktitle = {Advances in Neural Information Processing Systems},
    editor = {H. Wallach and H. Larochelle and A. Beygelzimer and F. d\textquotesingle Alch\'{e}-Buc and E. Fox and R. Garnett},
    pages = {},
    publisher = {Curran Associates, Inc.},
    title = {Hamiltonian Neural Networks},
    url = {https://proceedings.neurips.cc/paper_files/paper/2019/file/26cd8ecadce0d4efd6cc8a8725cbd1f8-Paper.pdf},
    volume = {32},
    year = {2019}
}

@software{Bradbury2018JAX:Programs,
    title = {{JAX}: composable transformations of {P}ython+{N}um{P}y programs},
    year = {2018},
    author = {Bradbury, James and Frostig, Roy and Hawkins, Peter and Johnson, Matthew James and Katariya, Yash and Leary, Chris and Maclaurin, Dougal and Necula, George and Paszke, Adam and Vander{\{}P{\}}las, Jake and Wanderman-{\{}M{\}}ilne, Skye and Zhang, Qiao},
    url = {http://github.com/jax-ml/jax},
    version = {0.3.13},
}

@misc{Cranmer2020LagrangianNetworks,
    title = {Lagrangian Neural Networks},
    year = {2020},
    author = {Cranmer, Miles and Greydanus, Sam and Hoyer, Stephan and Battaglia, Peter and Spergel, David and Ho, Shirley},
    month = {7},
    eprint={2003.04630},
    archivePrefix={arXiv},
    primaryClass={cs.LG},
    %url={https://arxiv.org/abs/2003.04630},
}

@article{Celledoni2025LearningIntegrators,
    title = {Learning dynamical systems from noisy data with inverse-explicit integrators},
    year = {2025},
    journal = {Physica D: Nonlinear Phenomena},
    author = {Celledoni, Elena and Eidnes, Sølve and Myhr, Håkon Noren},
    month = {2},
    pages = {134471},
    volume = {472},
    doi = {10.1016/j.physd.2024.134471},
    issn = {0167-2789}
}

@article{Grigorian2025LearningSystems,
    title = {Learning governing equations of unobserved states in dynamical systems},
    year = {2025},
    journal = {Physica D: Nonlinear Phenomena},
    author = {Grigorian, Gevik and George, Sandip V. and Arridge, Simon},
    month = {2},
    pages = {134499},
    volume = {472},
    doi = {10.1016/j.physd.2024.134499},
    issn = {0167-2789}
}

@article{Ghanem2024LearningMeasurements,
    title={Learning Physics Informed Neural ODEs with Partial Measurements}, 
    volume={39}, 
    %url={https://ojs.aaai.org/index.php/AAAI/article/view/33846}, 
    doi={10.1609/aaai.v39i16.33846}, 
    number={16}, 
    journal={Proceedings of the AAAI Conference on Artificial Intelligence}, 
    author={Ghanem, Paul and Demirkaya, Ahmet and Imbiriba, Tales and Ramezani, Alireza and Danziger, Zachary and Erdogmus, Deniz}, 
    year={2025}, 
    month={4}, 
    pages={16799–16807} 
}

@inproceedings{Chen2018NeuralEquations,
    author = {Chen, Ricky T. Q. and Rubanova, Yulia and Bettencourt, Jesse and Duvenaud, David K},
    booktitle = {Advances in Neural Information Processing Systems},
    editor = {S. Bengio and H. Wallach and H. Larochelle and K. Grauman and N. Cesa-Bianchi and R. Garnett},
    pages = {},
    publisher = {Curran Associates, Inc.},
    title = {Neural Ordinary Differential Equations},
    url = {https://proceedings.neurips.cc/paper_files/paper/2018/file/69386f6bb1dfed68692a24c8686939b9-Paper.pdf},
    volume = {31},
    year = {2018}
}

@inproceedings{Rahman2022NeuralIdentification,
    author={Rahman, Aowabin and Drgoňa, Ján and Tuor, Aaron and Strube, Jan},
    booktitle={2022 American Control Conference (ACC)}, 
    title={Neural Ordinary Differential Equations for Nonlinear System Identification}, 
    year={2022},
    volume={},
    number={},
    pages={3979-3984},
    doi={10.23919/ACC53348.2022.9867586}
}

@article{Bertalan2019OnData,
    title = {On learning Hamiltonian systems from data},
    year = {2019},
    journal = {Chaos: An Interdisciplinary Journal of Nonlinear Science},
    author = {Bertalan, Tom and Dietrich, Felix and Mezi{\'{c}}, Igor and Kevrekidis, Ioannis G.},
    number = {12},
    month = {12},
    pages = {121107},
    volume = {29},
    doi = {10.1063/1.5128231},
    issn = {1054-1500},
}

@phdthesis{Kidger2021OnEquations,
    title={{O}n {N}eural {D}ifferential {E}quations},
    year={2021},
    author = {Kidger, Patrick},
    school={University of Oxford},
    eprint={2202.02435},
    archivePrefix={arXiv},
}

@article{Karniadakis2021Physics-informedLearning,
    title = {Physics-informed machine learning},
    year = {2021},
    journal = {Nature Reviews Physics},
    author = {Karniadakis, George Em and Kevrekidis, Ioannis G. and Lu, Lu and Perdikaris, Paris and Wang, Sifan and Yang, Liu},
    number = {6},
    month = {6},
    day = {1},
    pages = {422--440},
    volume = {3},
    doi = {10.1038/s42254-021-00314-5},
    issn = {2522-5820}
}

@article{Raissi2019Physics-informedEquations,
    title = {Physics-informed neural networks: A deep learning framework for solving forward and inverse problems involving nonlinear partial differential equations},
    year = {2019},
    journal = {Journal of Computational Physics},
    author = {Raissi, M. and Perdikaris, P. and Karniadakis, G. E.},
    month = {2},
    pages = {686-707},
    volume = {378},
    publisher = {Academic Press Inc.},
    doi = {10.1016/j.jcp.2018.10.045},
    issn = {0021-9991}
}

@article{Desai2021Port-HamiltonianSystems,
    title = {Port-Hamiltonian Neural Networks for Learning Explicit Time-Dependent Dynamical Systems},
    year = {2021},
    author = {Desai, Shaan A. and Mattheakis, Marios and Sondak, David and Protopapas, Pavlos and Roberts, Stephen J.},
    journal = {Phys. Rev. E},
    volume = {104},
    issue = {3},
    pages = {034312},
    numpages = {10},
    year = {2021},
    month = {9},
    publisher = {American Physical Society},
    doi = {10.1103/PhysRevE.104.034312},
}

@article{Eidnes2023Pseudo-HamiltonianForces,
    title = {Pseudo-Hamiltonian neural networks with state-dependent external forces},
    year = {2023},
    journal = {Physica D: Nonlinear Phenomena},
    author = {Eidnes, Sølve and Stasik, Alexander J. and Sterud, Camilla and B{\o}hn, Eivind and Riemer-S{\o}rensen, Signe},
    month = {4},
    volume = {446},
    pages = {133673},
    publisher = {Elsevier B.V.},
    doi = {10.1016/j.physd.2023.133673},
    issn = {0167-2789},
    arxivId = {2206.02660}
}

@misc{Buisson-Fenet2023RecognitionODEs,
    title = {Recognition Models to Learn Dynamics from Partial Observations with Neural ODEs},
    year = {2023},
    author = {Buisson-Fenet, Mona and Morgenthaler, Valery and Trimpe, Sebastian and Di Meglio, Florent},
    month = {1},
    eprint={2205.12550},
    archivePrefix={arXiv},
    primaryClass={eess.SY},
    %url={https://arxiv.org/abs/2205.12550}, 
}

@article{Tsitouras2011RungeKuttaAssumption,
    title = {Runge–Kutta pairs of order 5(4) satisfying only the first column simplifying assumption},
    year = {2011},
    journal = {Computers {\&} Mathematics with Applications},
    author = {Tsitouras, Ch.},
    number = {2},
    month = {7},
    pages = {770--775},
    volume = {62},
    doi = {10.1016/j.camwa.2011.06.002},
    issn = {0898-1221}
}

@inproceedings{Quaglino2020SNODE:Identification,
    title = {SNODE: Spectral Discretization of Neural ODEs for System Identification},
    booktitle = {International Conference on Learning Representations},
    year = {2020},
    author = {Quaglino, Alessio and Gallieri, Marco and Masci, Jonathan and Koutn{\'{i}}k, Jan},
    eprint={1906.07038},
    archivePrefix={arXiv},
}

@article{Malani2023SomeInformation,
    title = {Some of the variables, some of the parameters, some of the times, with some physics known: Identification with partial information},
    year = {2023},
    journal = {Computers {\&} Chemical Engineering},
    author = {Malani, Saurabh and Bertalan, Tom S. and Cui, Tianqi and Avalos, José L. and Betenbaugh, Michael and Kevrekidis, Ioannis G.},
    month = {10},
    pages = {108343},
    volume = {178},
    doi = {10.1016/j.compchemeng.2023.108343},
    issn = {0098-1354}
}

@article{Haber2017StableNetworks,
    title = {Stable architectures for deep neural networks},
    year = {2017},
    journal = {Inverse Problems},
    author = {Haber, Eldad and Ruthotto, Lars},
    number = {1},
    pages = {014004},
    month = {12},
    volume = {34},
    publisher = {IOP Publishing},
    doi = {10.1088/1361-6420/aa9a90},
    issn = {13616420},
    arxivId = {1705.03341}
}

@article{Offen2022SymplecticSystems,
    title = {Symplectic integration of learned Hamiltonian systems},
    year = {2022},
    journal = {Chaos: An Interdisciplinary Journal of Nonlinear Science},
    author = {Offen, C. and Ober-Bl{\"{o}}baum, S.},
    number = {1},
    pages = {013122},
    month = {1},
    volume = {32},
    doi = {10.1063/5.0065913},
    issn = {1054-1500}
}

@article{David2023SymplecticNetworks,
    title = {Symplectic learning for Hamiltonian neural networks},
    year = {2023},
    journal = {Journal of Computational Physics},
    author = {David, Marco and M{\'{e}}hats, Florian},
    month = {12},
    pages = {112495},
    volume = {494},
    doi = {10.1016/j.jcp.2023.112495},
    issn = {0021-9991}
}

@inproceedings{Zhong2024SymplecticControl,
    title = {Symplectic ODE-Net: Learning Hamiltonian Dynamics with Control},
    booktitle = {International Conference on Learning Representations},
    year = {2020},
    author = {Zhong, Yaofeng Desmond and Dey, Biswadip and Chakraborty, Amit},
    eprint={1909.12077},
    archivePrefix={arXiv},
}

@inproceedings{Chen2020SymplecticNetworks,
    title = {Symplectic Recurrent Neural Networks},
    booktitle = {International Conference on Learning Representations},
    year = {2020},
    author = {Chen, Zhengdao and Zhang, Jianyu and Arjovsky, Martin and Bottou, Léon},
    eprint={1909.13334},
    archivePrefix={arXiv},
}

@misc{Rackauckas2021UniversalLearning,
    title = {Universal Differential Equations for Scientific Machine Learning},
    year = {2021},
    author = {Rackauckas, Christopher and Ma, Yingbo and Martensen, Julius and Warner, Collin and Zubov, Kirill and Supekar, Rohit and Skinner, Dominic and Ramadhan, Ali and Edelman, Alan},
    month = {11},
    eprint={2001.04385},
    archivePrefix={arXiv},
    primaryClass={cs.LG},
    %url={https://arxiv.org/abs/2001.04385}, 
}

@inproceedings{Botev2021WhichDynamics,
    title = {Which priors matter? Benchmarking models for learning latent dynamics},
    booktitle = {Thirty-fifth Conference on Neural Information Processing Systems Datasets and Benchmarks Track},
    year = {2021},
    author = {Botev, Aleksandar and London, Deepmind and Jaegle, Andrew and Wirnsberger, Peter and Hennes, Daniel and Higgins, Irina},
    eprint={2111.05458},
    archivePrefix={arXiv},
}

@article{Aarseth1963DynamicalGalaxies,
    author = {Aarseth, S. J. and Hoyle, F.},
    title = {Dynamical Evolution of Clusters of Galaxies, I},
    journal = {Monthly Notices of the Royal Astronomical Society},
    volume = {126},
    number = {3},
    pages = {223-255},
    year = {1963},
    month = {06},
    issn = {0035-8711},
    doi = {10.1093/mnras/126.3.223},
}

@article{Dehnen2001TowardsError,
    author = {Dehnen, Walter},
    title = {Towards optimal softening in three-dimensional N-body codes — I. Minimizing the force error},
    journal = {Monthly Notices of the Royal Astronomical Society},
    volume = {324},
    number = {2},
    pages = {273-291},
    year = {2001},
    month = {06},
    issn = {0035-8711},
    doi = {10.1046/j.1365-8711.2001.04237.x},
}

@software{Meltzer2026Neuralequations,
    title = {Neural Hamiltonian ordinary differential equations},
    year = {2026},
    author = {Meltzer, Sunniva and Eidnes, Sølve and Stasik, Alexander Johannes},
    doi = {10.5281/zenodo.20283041},
    version = {1.0},
}

\end{document}